\newcommand{\rebuttal}[1]{#1}
\newtheorem{theorem}{Theorem}
\newtheorem{proposition}[theorem]{Proposition}
\newcommand{\mN}{\mathcal{N}}
\newcommand{\mR}{\mathcal{R}}
\newcommand{\mX}{\mathcal{X}}
\newcommand{\mC}{\mathcal{C}}
\newcommand{\pdata}{p_{\mathrm{data}}}
\newcommand{\Span}{\mathbf{span}} 
\begin{document}

\title{Data Redaction from Conditional Generative Models}

\author{\IEEEauthorblockN{1\textsuperscript{st} Zhifeng Kong}
\IEEEauthorblockA{\textit{Computer Science and Engineering} \\
\textit{University of California San Diego}\\
La Jolla, USA \\
\texttt{z4kong@ucsd.edu}}
\and
\IEEEauthorblockN{2\textsuperscript{nd} Kamalika Chaudhuri}
\IEEEauthorblockA{\textit{Computer Science and Engineering} \\
\textit{University of California San Diego}\\
La Jolla, USA \\
\texttt{kamalika@cs.ucsd.edu}}
}

\maketitle

\begin{abstract}
Deep generative models are known to produce undesirable samples such as harmful content. Traditional mitigation methods include re-training from scratch, filtering, or editing; however, these are either computationally expensive or can be circumvented by third parties.
In this paper, we take a different approach and study how to post-edit an already-trained
conditional generative model so that it redacts certain conditionals that will, with high probability, lead to undesirable content. This is done by distilling the conditioning network in the models, giving a solution that is effective, efficient, controllable, and universal for a class of deep generative models. We conduct experiments on redacting prompts in text-to-image models and redacting voices in text-to-speech models. Our method is computationally light, leads to better redaction quality and robustness than baseline methods while still retaining high generation quality. 
\end{abstract}

\section{Introduction}

Deep generative models are unsupervised deep learning models that learn a data distribution from samples and then generate new samples from it. These models have shown tremendous success in many domains such as image generation \citep{rombach2021highresolution, ramesh2021zero,ramesh2022hierarchical,sauer2023stylegan}, audio synthesis \citep{kong2021diffwave, lee2023bigvgan}, and text generation \citep{gpt4, touvron2023llama}. Most modern deep generative models are conditional: the user inputs some context known as the conditionals, and the model generates samples conditioned on the context.

However, as these models have grown more powerful, there has been increasing concern about their trustworthiness: in certain situations, these models produce undesirable outputs. For example, with text-to-image models, one may craft a prompt that contains offensive, biased, malignant, or fabricated content, and generate a high-resolution image that visualizes the prompt \citep{nichol2021glide, birhane2021multimodal, schuhmann2022laion, ramesh2022hierarchical,rando2022red, nudenet, man}. With speech synthesis models, one may easily turn text into celebrity voices \citep{Betker2022TTS,wang2023neural,zhang2023speak}. Text generation models can emit offensive, biased, or toxic content \citep{pitsilis2018detecting,wallace2019universal,mcguffie2020radicalization,gehman2020realtoxicityprompts,abid2021persistent,perez2022red,schramowski2022large}. 

One plausible solution to mitigate this problem is to remove all undesirable samples from the training set and re-train the model. This is too computationally heavy for modern, large models. 
Another solution is to apply a classifier that filters out undesirable conditionals or outputs \citep{rando2022red, nudenet, man}, or to edit the outputs and remove the undesirable content after generation \citep{schramowski2022safe}. However, in cases where the model owners share the model weights with third parties, they do not have control over whether the filters or editing methods will be used. 
In order to prevent undesirable outputs more efficiently and reliably, we propose to {\em{post-edit}} the weights of a pre-trained model, which we call \textit{data redaction}. 


The first challenge is how to frame data redaction for conditional generative models. Prior work in data redaction for \textit{unconditional} generative models considered this problem in the space of outputs, and framed the problem as learning the data distribution restricted to a valid subset of outputs \citep{kong2023data}. However, a conditional generative model learns a collection of (usually an infinite number of) distributions (one for each conditional) all of which are induced by networks that share weights; therefore, we cannot apply this method one by one for every conditional we would like to redact. 
In this paper, we frame data redaction for \textit{conditional} generative models as redacting a set of conditionals that will very likely lead to undesirable content. In particular, we do redaction in the conditional space, instead of separately redacting samples generated from each conditional in the output space.  

This statistical machine learning framework inspires us to design a \textit{universal}, \textit{efficient}, and \textit{effective} method for data redaction. We only re-train (or \textit{distill}) the conditional part of the network by projecting redacted conditionals onto different, non-redacted reference conditionals. It is computationally light because all but the conditioning network is fixed, and we only need to load a small fraction of the dataset for training.

We show there exists an explicit data redaction formula for simple class-conditional models. For more complicated generative models in real-world applications, we introduce a series of techniques to effectively redact certain conditionals but retain high generation quality. These include model-specific distillation losses and training schemes, methods to increase the capacity of the student conditioning network, ways to improve efficiency, and a few others. 

We test our data redaction method on two real-world applications: GAN-based text-to-image \citep{zhu2019dm} and Diffusion-based text-to-speech \citep{kong2021diffwave}. 
For text-to-image, we redact prompts that include certain words or phrases. Our method has significantly better redaction quality and robustness than baseline methods while retaining similar generation quality as the pre-trained model.
For text-to-speech, we redact certain voices outside the training set. Our method achieves both high redaction and speech quality. Audio samples can be found on our demo website (\url{https://dataredact2023.github.io/}). 
Our methods for both applications are extremely computationally efficient: redacting text-to-image models takes approximately 0.5 hour, and redacting text-to-speech models takes less than 4 hours, both on one single NVIDIA 3080 GPU. In contrast, training the text-to-image model takes more than a day on one GPU, and training the text-to-speech model takes 2-3 days on 8 GPUs. This demonstrates that data redaction can be done significantly more efficiently than re-training full models from scratch. 

\newcommand{\promptsetting}{none}
\newcommand{\promptbaseline}{none}
\newcommand{\promptid}{none}
\newcommand{\promptsubfig}[4]{
	\begin{subfigure}[b]{0.2\textwidth}
		\centering 
		\includegraphics[width=0.99\textwidth]{figs-#1-#2-#3}
		\caption{#4}
	\end{subfigure}
}

\renewcommand{\promptsetting}{base-long_beak_white_belly_to_short_beak_black_belly-invalid}
\renewcommand{\promptbaseline}{Rewrite-upsample1-long_beak_white_belly_to_short_beak_black_belly-invalid}
\renewcommand{\promptid}{031.Black_billed_Cuckoo_sentenceID227-0}

\begin{figure*}[!t]
\centering
    \promptsubfig{\promptsetting}{\promptid}{pretrained.png}{Pre-trained}
    \promptsubfig{\promptsetting}{\promptid}{target.png}{Reference}
    \promptsubfig{\promptsetting}{\promptid}{distilled.png}{Our Redaction}
    \promptsubfig{\promptbaseline}{\promptid}{distilled.png}{Rewriting \citep{bau2020rewriting}}
    \caption{Redact \texttt{``white belly''} from text-to-image models \citep{zhu2019dm}. The prompt  is \texttt{``this bird has feathers that are black and has a white belly''}. (a) Sample generated from the pre-trained model, which produces a visualization of the prompt. (b) The target sample that redacts \texttt{``white belly''} but keeps the other concepts. (c) Generated sample from our redaction model, which aims to redact \texttt{``white belly''} and approximates the reference sample. (d) Sample generated from the Rewriting baseline, which is blurry and has lower quality. More samples can be found in Appendix \ref{appendix: text2img exp: vis}.}
    \label{fig: page 2 figure}
\end{figure*}

\subsection{Related Work}

\textbf{Machine Unlearning.} Machine unlearning computes or approximates a re-trained machine learning model after removing certain training samples \citep{cao2015towards}. \rebuttal{Many unlearning methods have been proposed for supervised learning \citep{guo2019certified, schelter2020amnesia, neel2021descent, sekhari2021remember, izzo2021approximate, ullah2021machine, bourtoule2021machine,warnecke2021machine}, among which some provide theoretically guaranteed unlearning or removal for strictly convex classifiers. There is one method approximate deletion method for generative models \citep{kong2022approximate}, which aims to delete from an unconditional generative model by post-hoc rejection sampling}. The goal of data redaction is very different from machine unlearning, which unlearns training samples and is usually in the privacy context, while data redaction prevents undesirable samples from generation regardless whether they are in the training set. \footnote{\rebuttal{It is also unclear how to do efficient unlearning for complex conditional generative models (e.g. text-to-X) because it is unclear what exact combination of (text, X) pairs to unlearn and how to do it beyond retraining from scratch.}} A detailed explanation can be found in Section II-C in \citep{kong2023data}. 

\textbf{Data Filtering and Semantic Editing.} A direct way to prevent certain samples to be generated is to apply a data filter (e.g., a malicious content classifier). The filter can be applied to training data before training \citep{nichol2021glide, schuhmann2022laion, ramesh2022hierarchical}, or applied post-hoc to model outputs \citep{rando2022red, nudenet, man}.
Another line of research has looked at semantically modifying the outputs of generative models. For GANs \citep{goodfellow2014generative}, \citep{bau2020semantic} computes an editing vector in the latent space to alter a semantic concept. 
For diffusion models \citep{ho2020denoising} especially text-to-image models like Stable Diffusion \citep{rombach2021highresolution}, there are also a number of image editing techniques \citep{bar2022text2live, hertz2022prompt, kawar2022imagic, valevski2022unitune, brack2022stable}. \citep{schramowski2022safe} applied image editing to prevent diffusion models from generating malicious images through a safety guidance term that alters the sampling algorithm for inappropriate prompts. 

While these filtering and editing methods can be used to prevent malicious images, the model parameters are not modified. Consequently, in cases where the models owners share the model weights with third parties, they do not have control over whether the third parties will use the filters or editing methods. In contrast, our proposed method modifies the model weights to address this issue.

\textbf{Data Redaction in Unconditional Models.} 
Several works have studied methods to prevent generative models from producing undesirable samples, either by re-training or post-editing.
For GANs, \citep{asokan2020teaching} and \citep{sinha2021negative} investigated re-training methods via modified loss functions that penalize generation of undesirable samples, and \citep{bau2020rewriting} and \citep{cherepkov2021navigating} introduced post-hoc parameter rewriting techniques for semantic editing, which can be used to remove undesirable artifacts. \citep{kong2023data}, \citep{malnick2022taming}, and \citep{moon2023feature} designed post-editing data redaction methods for various types of pre-trained generative models. 

All these methods are restricted to the unconditional setting as they modify the mapping from latent vectors to samples. In contrast, the goal of this paper is to redact data from pre-trained, \textit{conditional} generative models. In these models, the conditional information heavily controls the content and style of generated samples (e.g. text-to-X), whereas the latent controls variation. It is therefore necessary to also modify the mapping from conditional to samples.

\textbf{Redaction Methods for Stable Diffusion.} \citep{gandikota2023erasing} fine-tunes Stable Diffusion to incorporate negative guidance on undesirable visual styles (e.g., those under copyright protection). As a result, undesirable samples will not be generated with the standard sampling algorithm. \rebuttal{However, one might recover the original score from the distilled score to break this method  (see Appendix \ref{appendix: recovering original score} for details).
\citep{gandikota2023unified} proposed an analytic solution for keys in cross attention blocks to edit concepts. A similar approach \citep{zhang2023forget} proposed a re-steering mechanism for keys by minimizing the attention maps of target concepts. 
\citep{heng2023selective} and \citep{kumari2023ablating} proposed to forget or manipulate concepts by further fine-tuning the entire network with certain continual learning objectives.
These methods are heavily designed for text-to-image tasks with Stable Diffusion. They require the model to be trained with either classifier-free guidance \citep{ho2022classifier} or cross attention blocks, or they need to fine-tune the entire large diffusion network. In contrast, our proposed method is universal, applies to a broader range of generative models, and applies to multiple data domains. To our knowledge, it is the first method that is able to redact voices from a trained speech synthesis model. 
}

\section{Preliminaries}

\textbf{Conditional Generative Models.}
Let $\mC$ be the space of conditionals. It could be a finite set of discrete labels, or an infinite set of continuous representations. 
\footnote{In cases where there are infinitely many discrete labels such as text or 16-bit floats, these conditionals are usually considered as continuous or transformed to continuous representations.}
For any $c\in\mC$ there is an underlying data distribution $\pdata(\cdot|c)$ (on $\mathbb{R}^d$) conditioned on $c$. In the discrete label case, this simply corresponds to a finite number of data distributions for all labels. In the more complicated continuous case, there is usually an underlying assumption that $\pdata(\cdot|c)$ is Lipschitz with respect to $c$: that is, $\pdata(\cdot|c)$ will not change much if $c$ does not change much.

Let $X=\{(x_i,c_i)\}$ be the set of training data, in which each $x_i$ is the sample and $c_i$ is the conditional (for example, $x_i$ is an image and $c_i$ is its caption). Let $G$ be a conditional generative model trained on $X$. $G$ has two inputs -- a sample latent $z$ drawn from a Gaussian distribution and a conditional $c$ -- and outputs sample $x=G(z|c)$. For each $c\in\mC$, $G$ draws from a generative distribution $p_G(\cdot|c)$, which is trained to learn $\pdata(\cdot|c)$. In the discrete label case, this is equivalent to modeling a finite number of distributions. In the continuous case, $G$ also needs to generalize to unseen conditionals, because not all conditionals exist in the training set. We assume that $p_G(\cdot|c)$ learns $\pdata(\cdot|c)$ very well, as how to train these models is outside the scope of this paper.

\textbf{Problem setup.} Our goal is to redact a set of conditionals $\mC_{\Omega}\subset\mC$, referred to as the redaction conditionals, which with high probability lead to undesirable content. For example, for text-to-image models, we may be looking to redact text prompts related to violence or offensive content. \footnote{This does not necessarily redact every possible offensive output; for example, an innocent prompt such as "a day in the park" might with very low probability result in a violent image which our solution will not address.} 

We assume that the redaction conditionals are given to us either as a set or described by a classifier. We assume that we are working with an already trained generative model $G$ and we are only allowed to post-edit it. Re-training generative models from scratch can be highly compute-intensive, and so our goal is to consider computationally efficient solutions. Additionally, we also want to avoid solutions that involve external filters, since a third-party can choose not to use them. A final requirement of our solution is that it should retain high generation quality for the conditionals that are not to be redacted. 

We assume that we have access to the parameters of the network $G$ and (part or whole of) its training dataset $X$. \footnote{\rebuttal{One setting this assumption holds is when the model owners want to make their model safer. We believe the only possible solution for a closed-source model is filtering. 
}} The goal of this paper is to edit the parameters of model $G$ to form a new model $G'$ so that harmful conditionals lead to the generation of benign outputs. 


Our proposed solution addresses this problem in the context where the conditioning networks are separate from the main generative network -- which holds for most current network architectures -- and achieves this by distilling only the conditioning networks. 

\section{Method}

In this section, we consider a special solution to our redaction task: for redacted conditionals $c\in\mC$, we let $G'$ learn the distribution conditioned on a different, non-redacted conditional $\hat{c}\in\mC\setminus\mC_{\Omega}$, which we denote as the reference conditional for $c$. Formally,
\begin{equation}\label{eq: general solution}
    p_{G'}(\cdot|c)=p_G(\cdot|\hat{c})\text{ if }c\in\mC_{\Omega}\text{, otherwise }p_G(\cdot|c).
\end{equation}

Next, we introduce an efficient way to achieve \eqref{eq: general solution}. Let $H$ be the (separate) conditioning network in the generator network $G$. $H$ takes the conditional $c$ as input and computes conditional representation $H(c)$, which is then fused into the main generative network (potentially at different layers). 
%
Our solution is to project the conditional representation $H'(c)$ of the new conditioning network to $H(\hat{c})$: 
\begin{equation}\label{eq: conditional solution}
    H'(c)=H(\hat{c})\text{ if }c\in\mC_{\Omega}\text{, otherwise }H(c).
\end{equation}

We provide an illustrative and analytical explanation to our method in Section \ref{sec: conditioned on discrete}. Specifically, we show \eqref{eq: conditional solution} can be done explicitly if the model is conditioned on a few discrete labels and the conditioning network is affine. 
We then introduce methods for more complicated, real-world scenarios in Section \ref{sec: conditioned on continuous}. For these models conditioned on continuous representations and with complicated architecture, we introduce distillation-based methods to approximately achieve \eqref{eq: conditional solution}.

~~\newline 

\section{Redacting Models Conditioned on Discrete Labels}\label{sec: conditioned on discrete}

In this section, we show for simple class-conditional models, there is an explicit formula to redact certain labels.

\textbf{Redacting a single label.}
Suppose there are $k$ labels: $\mC=\{c_1,\cdots,c_k\}$, where label $j$ is to be redacted. We consider a common type of conditioning method: each label $c_i$ is represented by a $k$-dimensional embedding vector $v_i\in\mathbb{R}^k$, and $H$ is an affine transformation whose output dimension $r\geq k$. We assume the embedding vectors are linearly independent: $\Span\{v_1,\cdots,v_k\}=\mathbb{R}^k$.
A special case of this formulation is the conditioning method proposed by \citep{mirza2014conditional}, where each $v_i=\mathbf{e}_i$ is the one-hot vector with the $i$-th element $=1$, and is concatenated to the latent code. 

Let $H(v)=Mv$, where $M\in\mathbb{R}^{r\times k}$. The redaction problem is equivalent finding an $M'\in\mathbb{R}^{r\times k}$ such that $M'v_i=Mv_i$ for $i\neq j$ and $M'v_j=MV_{-j}\eta_{-j}$ for an one-hot vector $\eta_{-j}\in\mathbb{R}^{k-1}$, where $V_{-j}=[v_1,\cdots,v_{j-1},v_{j+1},\cdots,v_k]\in\mathbb{R}^{k\times(k-1)}$.
The first condition $M'v_i=Mv_i$ for $i\neq j$ indicates every row of $M'-M$ is in the null space of $\{v_i\}_{i\neq j}$. The null space is a one-dimensional subspace with basis vector $u$. Then, $M'-M$ can be decomposed as $\omega u^{\top}$ for some $\omega\in\mathbb{R}^r$. Then, by to the second condition $M'v_j=MV_{-j}\eta_{-j}$, we have $\omega=\frac{1}{u^{\top}v_j}M(V_{-j}\eta_{-j}-v_j)$. 
This means by replacing $M$ with $M'=M(I+\frac{1}{u^{\top}v_j}(V_{-j}\eta_{-j}-v_j)u^{\top})$, we are able to redact label $j$. When conditioned on $j$, the edited model will generate another digit based on which element in $\eta_{-j}$ is non-zero. 

\textbf{Redacting multiple labels.} Suppose there are multiple labels $\{1,\cdots,J\}$ ($J<k$) to be redacted. The $M'$ matrix needs to satisfy $M'v_i=Mv_i$ for $i>J$, and $M'v_j=MV_{-J}\eta_{-j}$ for $j\leq J$, where $V_{-J}=[v_{J+1},\cdots,v_k]\in\mathbb{R}^{k\times(k-J)}$. For $j\leq J$, let $u_j$ be the basis vector of the null space of $\{v_i\}_{i\neq j}$. Each row of $M'-M$ is in the null space of $\{v_i\}_{i>J}$, which can be written as a linear combination of $\{u_j\}_{j=1}^J$. Therefore, we can represent $M'-M$ as
\[M'-M=\sum_{j=1}^J \omega_j u_j^{\top}=WU^{\top},\]
where the $j$-th column of $W$ ($U$) is $\omega_j$ ($u_j$). Let $V_J=[v_1,\cdots,v_J]$ and $Y_{-J}=[\eta_{-1},\cdots,\eta_{-J}]$. We have $M'V_J=MV_{-J}Y_{-J}$. This simplifies to
\[WU^{\top}V_J=M(V_{-J}Y_{-J}-V_J).\]
Notice that $U^{\top}V_J$ is a diagonal matrix with $j$-th diagonal element $u_j^{\top}v_j\neq0$. Therefore, we have 
\begin{equation}\label{eq: redact J discrete labels}
    W=M(V_{-J}Y_{-J}-V_J)(U^{\top}V_J)^{-1}.
\end{equation}

\textbf{Simplified formula for one-hot embedding vectors.} Let $v_i=\mathbf{e}_i$ for each $i$. Then, we have $u_i=v_i=\mathbf{e}_i$, and therefore $U^{\top}V_J=I$. We also have $U=V_J=[I_J|\mathbf{0}]^{\top}$ and $V_{-J}=[\mathbf{0}|I_{k-J}]$, where $I_J$ is the $J$-dimensional identity matrix. Then, 
\[WU^{\top}=M([\mathbf{0}|I_{k-J}]Y_{-J}-[I_J|\mathbf{0}]^{\top})[I_J|\mathbf{0}]=M\left(\begin{array}{cc}
    -I_J & \mathbf{0} \\
    Y_{-J} & \mathbf{0}
\end{array}\right).\]
As a result,
\[M'=M+WU^{\top}=M\left(\begin{array}{cc}
    \mathbf{0} & \mathbf{0} \\
    Y_{-J} & I_{k-J}
\end{array}\right).\]

\textbf{Higher embedding dimension.} Because of linear independence, the null space of $\{v_i\}_{i\neq j}$ has 1 dimension higher than the null space of $\{v_i\}_{i=1}^k$. Therefore, we can pick $u_j\in\mathbf{null}(\{v_i\}_{i\neq j})\setminus\mathbf{null}(\{v_i\}_{i=1}^k)$.

~~\newline

\section{Redacting Models Conditioned on Continuous Representations}\label{sec: conditioned on continuous}

In practice, the networks are usually complicated and highly non-linear. Therefore, there is generally no explicit formula to achieve \eqref{eq: conditional solution} due to non-linearity and limited expressive power of the conditioning network. To approximately achieve \eqref{eq: conditional solution}, we propose to distill the conditioning network by minimizing
\begin{equation}\label{eq: general loss}
\begin{array}{rl}
    \min_{H'}~L(H';\lambda) = & \mathbb{E}_{c\in\mC\setminus\mC_{\Omega}}\|H'(c)-H(c)\| \\
    & + \lambda\cdot \mathbb{E}_{c\in\mC_{\Omega}}\|H'(c)-H(\hat{c})\|
\end{array}
\end{equation}
for some metric $\|\cdot\|$ and balancing coefficient $\lambda>0$.
In the rest of this section, we study two types of common conditional generative models: image models conditioned on text prompts, and speech models conditioned on spectrogram representations. We will demonstrate specific losses and distillation techniques for each model that align with the slightly different goals in each task.

\subsection{Redacting GAN-based Text-to-Image Models}\label{sec: method t2img}

In this section, we study how to redact text prompts in text-to-image models. Modern text-to-image models can produce high-resolution images conditioned on text prompts that may be offensive, biased, malignant, or fabricated \citep{nichol2021glide, birhane2021multimodal, schuhmann2022laion, ramesh2022hierarchical,rando2022red, nudenet, man}. These models are usually expensive to re-train, so it is important to redact these prompts without re-training.

Especially, we look at DM-GAN \citep{zhu2019dm}, a GAN-based text-to-image model. It is trained on pairs of text and images from the CUB dataset \citep{CUB,reed2016learning}, a dataset for various species of birds. DM-GAN is composed of three cascaded generative networks $\{G_1,G_2,G_3\}$.
The first $G_1$ generates $64\times64$ images, the second $G_2$ up-samples to $128\times128$, and the third $G_3$ up-samples to $256\times256$. 
Each $G_i$ has its own conditioning network $H_i$. For a given prompt $c$, the model computes a sentence embedding $v_s(c)$ and word embeddings $v_w(c)$ from a pre-trained text encoder \citep{xu2018attngan}. The first conditioning network $H_1$ performs conditioning augmentation on the sentence embedding and concatenate the output to the latent variable. $H_2$ and $H_3$ apply memory writing modules to the word embeddings and fuse the outputs with the previously generated low-resolution images via several gates.

\textbf{Defining $\hat{c}$}.
We assume $\mC_{\Omega}$ contains prompts that have undesirable words or phrases. For these prompts, the reference prompts are defined by replacing these words with non-redacted ones.

\textbf{Sequential distillation.}
We propose to distill the conditioning networks $\{H_1,H_2,H_3\}$ sequentially based on \eqref{eq: general loss}. This is because both $G_2$ and $G_3$ are generative super-sampling networks, which take $G_1$ and $G_2$ outputs as inputs, respectively. After $G_1$ is edited to $G_1'$ for redaction, $G_2$ will take $G_1'$ outputs as inputs, and similar for $G_3$. Formally,
\begin{equation}\label{eq: t2img seq distill 1}
\begin{array}{rl}
    \displaystyle H_1'= \arg\min_{H_1'} & \{\mathbb{E}_{c\in\mC\setminus\mC_{\Omega}}\|H_1'(v_s(c))-H_1(v_s(c))\| \\
    & + \lambda\cdot\mathbb{E}_{c\in\mC_{\Omega}}\|H_1'(v_s(c))-H_1(v_s(\textcolor{red}{\hat{c}}))\|\},
\end{array}
\end{equation}

\begin{equation}\label{eq: t2img seq distill 2}
\begin{array}{rll} 
    \displaystyle H_i'=\arg\min_{H_i'} & \{\mathbb{E}_{c\in\mC\setminus\mC_{\Omega},z}& \|H_i'(v_w(c),G_{i-1}'(z|c)) - ~~~~~~ \\ 
    & & ~~~~~~ H_i(v_w(c),G_{i-1}'(z|c))\| \\ 
    \displaystyle + & \lambda\cdot\mathbb{E}_{c\in\mC_{\Omega},z} & \|H_i'(v_w(c),G_{i-1}'(z|c)) - ~~~~~~ \\
    & & ~~~~~~ H_i(v_w(\textcolor{red}{\hat{c}}),G_{i-1}'(z|\textcolor{red}{\hat{c}}))\|\} 
\end{array}
\end{equation}
for $i=2,3$.

\textbf{Improved capacity.} 
As $H_1'$ needs to approximate a piecewise function that is defined differently for two sets of sentence embeddings, we need to increase the capacity of $H_1'$ for better distillation. We append a few LSTM layers to the beginning of $H_1'$, which directly take the sentence embeddings as inputs. The LSTM layers are followed by a convolution layer that reduces hidden dimensions to 1. We initialize this layer with zero weights for training stability. We expect these layers can project sentence embeddings of $c$ to those of $\hat{c}$. The rest of $H_1'$ has the same architecture as $H_1$ but all weights are initialized for training.
We do not increase the capacity of $H_2'$ and $H_3'$ for two reasons. First, $H_1'$ has more direct impact on the generated images because it directly controls the initial low-resolution image. Second, the memory writing modules of $H_2$ and $H_3$ are already very expressive.

\textbf{Fixing the variance prediction part in $H_1$.} 
We aim to reduce the computational overhead by fixing certain variables.
The conditioning augmentation module in $H_1$ first computes a mean and a variance vector, and then samples from the Gaussian defined by them. We fix the variance prediction part and only distill the mean prediction part. In our experiments the number of parameters to be trained in $H_1'$ (with improved capacity) is reduced by $\sim32\%$ and therefore matches $H_1$.

\textbf{$\lambda$ annealing.}
In order to make sure the distilled conditioning networks also approximate the pre-trained ones well for non-redacted prompts, we anneal the balancing coefficient $\lambda$ during distillation: we initialize $\lambda=\lambda_{\min}$ and linearly increases to $\lambda_{\max}$ in the end.

\subsection{Redacting Diffusion-based Text-to-Speech Models}\label{sec: method tts}

Modern text-to-speech models can turn text into high-quality speech in unseen voices such as celebrity voices \citep{kong2021diffwave,Betker2022TTS,wang2023neural,zhang2023speak}. This may have unpredictable public impact if these models are used to fake celebrities. In this section, we study redacting certain voices from a pre-trained text-to-speech model.

Especially, we look at DiffWave \citep{kong2021diffwave}, a diffusion probabilistic model that is conditioned on spectrogram and outputs waveform. It is trained on speech of a single female reading a book, which we call the pre-trained voice. There are $n=30$ layers or residual blocks in DiffWave, each containing one independent conditioning network $H_i$. The architecture of each $H_i$ includes two up-sampling layers followed by one convolution layer. 

\textbf{Defining $\hat{c}$ with voice cloning.}
We assume $\mC_{\Omega}$ contains a few clips of speech in a specific voice. We train a voice cloning model (CycleGAN-VC2 \citep{kaneko2019cyclegan}) between the specific and pre-trained voices, and then transform all clips in $\mC_{\Omega}$ to the pre-trained voice. By doing this we obtain time-aligned pairs between $c\in\mC_{\Omega}$ and the corresponding $\hat{c}$: when we select a small duration $[t,t+\Delta t]$, the content of $c_{t:t+\Delta t}$ is the same as $\hat{c}_{t:t+\Delta t}$, yet only the voices are different.

\textbf{Improved voice cloning.}
We find the voice cloning quality of CycleGAN-VC2 can be improved by making the two unpaired training sets more similar. We first use a pre-trained Whisper model \citep{radford2022robust} to extract text from redacted speech. Then, we use Tortoise-TTS \citep{Betker2022TTS} to turn these text into speech in the pre-trained voice. Note that this cannot be used to define $\hat{c}$ directly because the generated samples are not time-aligned with the speech to be redacted. However, these generated samples are more similar to the redacted samples because they have the same text, and therefore it is easier for CycleGAN-VC2 to learn transformations between these two voices.

\textbf{Parallel distillation.} 
We propose to distill all conditional layers $H_i$'s in parallel as they are independent. We minimize the following loss:
    $\min \frac1n\sum_{i=1}^n L(H_i';\lambda).$

\textbf{Fixing up-sampling layers in $H_i$.}
To reduce computation overhead we fix the two up-sampling layers in each $H_i$. We only distill the last convolution layer in each $H_i$. 

\textbf{Improved capacity.}
To improve redaction quality, we increase the capacity of each $H_i'$ by replacing its last convolution layer $h_{\mathrm{conv}}$ with a spectrogram-rewriting module. It has two components: a gate $h_{\mathrm{gate}}$ consisting of a convolution with zero initialization followed by sigmoid, and a transformation block $h_{\mathrm{trans}}$ consisting of two convolution layers. The forward computation of the spectrogram-rewriting module is defined as:
    $y = h_{\mathrm{conv}}(v)\odot h_{\mathrm{gate}}(v) + h_{\mathrm{conv}}(h_{\mathrm{trans}}(v))\odot (1-h_{\mathrm{gate}}(v)),$
where $v$ is the up-sampled mel-spectrogram and $y$ is the output representation at each layer. We expect this module can retain the pre-trained voice and also project redacted voices to the pre-trained voice.

\textbf{Non-uniform distillation losses.}
We conjecture the all conditioning layers are not of the same importance because of their order and different hyper-parameters specifically the dilation $2^{i\mathrm{~mod~}n'}$ in the corresponding residual layer. This motivates us to use different weights and $\lambda$ values for each $H_i$:
    $\min \sum_{i=1}^n w_i L(H_i';\lambda_i).$
We test different schedules described in Table \ref{tab: tts distill non-uniform schedules}.

\begin{table}[!h]
    \centering
    \caption{Schedules for the non-uniform distillation losses.}
    \begin{tabular}{c|c}
    \toprule
        name & schedule \\ \hline
        $w_i$-order & $w_i=\frac1n+\alpha(i-(n+1)/2)$ \\
        $\lambda_i$-order & $\lambda_i=\lambda+\beta(i-(n+1)/2)$ \\ 
        $w_i$-dilation & $w_i=\frac1n+\alpha(i\mathrm{~mod~}n'-(n'+1)/6)$ \\
        $\lambda_i$-dilation & $\lambda_i=\lambda+\beta(i\mathrm{~mod~}n'-(n'+1)/6)$ \\
    \bottomrule
    \end{tabular}
    \label{tab: tts distill non-uniform schedules}
\end{table}

\section{Experiments}

In this section, we aim to answer the following questions.
(1) Is the redaction method in Section \ref{sec: conditioned on discrete} able to fully redact labels?
And (2) do the redaction algorithms in Section \ref{sec: conditioned on continuous} redact certain conditionals well and retain high generation quality on real-world applications?

\subsection{Redacting Models Conditioned on Discrete Labels}

We train a class-conditional GAN called cGAN \citep{mirza2014conditional} on MNIST \citep{lecun2010mnist}. Each conditional has a $10$-dimensional embedding vector, and is concatenated to the latent vector as the input. The affine transformation matrix $M$ in Section \ref{sec: conditioned on discrete} is the last 10 rows of the weight matrix of the first fully connected layer. We redact labels \texttt{0,1,2,3} according to \eqref{eq: redact J discrete labels}, where we let $\hat{c}=9-c$ for them. Generated samples of pre-trained and redacted models are shown in Fig. \ref{fig: cGAN visualization}.

\begin{figure}[!h]
    \centering
    \includegraphics[width=0.4\textwidth]{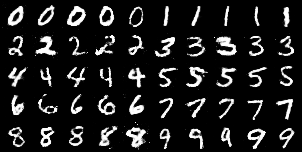} \\
    \vspace{0.2em}
    \includegraphics[width=0.4\textwidth]{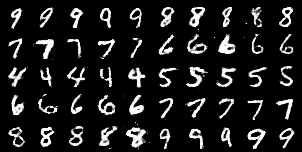}
    \caption{Redacting labels \texttt{0,1,2,3} in cGAN on MNIST. Upper: samples generated from the pre-trained model. Down: samples generated from the redacted model. Redacted conditionals (first two rows) are edited as expected, and other conditionals (last three rows) remain unchanged.}
    \label{fig: cGAN visualization}
\end{figure}

\subsection{Redacting GAN-based Text-to-Image Models}

\textbf{Setup.}
We use the pre-trained DM-GAN \citep{zhu2019dm} model trained on the CUB dataset \citep{CUB}, which contains 8855 training images and 2933 testing images of 200 subcategories belonging to birds. Each image has 10 captions \citep{reed2016learning}. Our distillation algorithm is trained with the caption data only. 
We redact prompts that contain certain words or phrases. We redact the word \texttt{blue}$\in c$ by defining $\hat{c}$ as the prompt that replaces all \texttt{blue} with another word \texttt{red}. \footnote{Any word other than \texttt{blue} can be used.} Similarly, we redact \texttt{blue wings} and \texttt{red wings} by replacing these phrases to \texttt{white wings}. We redact \texttt{long beak} and \texttt{white belly} by replacing the first to \texttt{short beak} and the second to \texttt{black belly}. Finally, we redact \texttt{yellow} and \texttt{red} by replacing them to \texttt{black}, which is more challenging as many samples are redacted. 

Table \ref{tab: t2img redacted num} includes the number of training and test prompts that are redacted in each experiment. Note that when we redact \texttt{blue wings} and \texttt{red wings}, we also redact phrases \texttt{wings that are blue} and \texttt{wings that are red}. 

\begin{table*}[!h]
    \centering
    \caption{Number of redacted training and test prompts. There are 88550 training prompts and 29330 test prompts in total.}
    \begin{tabular}{c|cc}
    \toprule
        Redaction prompts & \# redacted training prompts & \# redacted test prompts  \\ \hline
        \texttt{long beak, white belly} & 10377 & 3369 \\
        \texttt{blue / red wings} & 732 & 303 \\
        \texttt{blue} & 6113 & 2175 \\
        \texttt{yellow, red} & 29514 & 9319 \\
    \bottomrule
    \end{tabular}
    \label{tab: t2img redacted num}
\end{table*}

\textbf{Architecture and optimization.}
The architecture of the pre-trained model and other details are in Appendix \ref{appendix: text2img exp}. The architecture of student conditioning networks with improved capacity is shown in Fig. \ref{fig: DMGAN H}.
For each $H_i$, $i=1,2,3$, we use the Adam optimizer \citep{kingma2014adam} with a learning rate $0.005$ to optimize the mean square error loss. The redaction algorithm terminates at 1000 iterations. For $H_1$ we use a batch size of 128, and for $H_2$ and $H_3$ we reduce the batch size to 32 in order to fit into GPU memory. 

\textbf{Configurations.}
We first use the sequential distillation \eqref{eq: t2img seq distill 1} and \eqref{eq: t2img seq distill 2} with $\lambda=1$ to perform redaction, which we denote as the base configuration. We then improve the capacity by using a 3-layer bidirectional LSTM with hidden size $=32$ and dropout rate $=0.1$. Next, we fix the variance prediction in $H_1$ to reduce the number of parameters to optimize, which matches the base configuration. Finally, we apply $\lambda$ annealing by setting $\lambda_{\min}=1$ and $\lambda_{\max}=3$.

\textbf{Baseline.}
We compare to the Rewriting algorithm \citep{bau2020rewriting}, a semantic editing method originally designed for unconditional generative models. We adapt their method to DM-GAN by rewriting $G_1$, $G_2$, and $G_3$ sequentially. For both $G_2$ and $G_3$ we rewrite the up-sampling layer before the feature output. For $G_1$ we have choices of rewriting the up-sampling layer at different resolutions ranging from $8\times8$ to $64\times64$. We test all these choices in the experiment.

\textbf{Evaluation metrics.} 
\rebuttal{To evaluate \textit{generation quality} of $G'$, we compute Inception Scores (IS) \citep{salimans2016improved} for images conditioned on redacted and valid prompts, separately. In detail, the IS scores are computed as
$\exp(\mathbb{E}_{x}\mathbb{KL}(p(y|x)\parallel p(y)))$, where $x\sim p_{G'}(\cdot|c)$ for $c\sim\mathrm{Uniform}(\mC_{\Omega})$ or $\mathrm{Uniform}(\mC\setminus\mC_{\Omega})$, $p(y|x)$ is the logit from the Inception-V3 output layer \citep{szegedy2016rethinking}, and $p(y)$ is the marginal. We generate one sample for each text prompt for evaluation.
}

To evaluate \textit{redaction quality}, we compute the following three metrics where $c\sim\mC_{\Omega}$ and $z\sim\mN$. 
\begin{enumerate}
    \item $\mR_{G(\cdot|c/\hat{c})}$ measures faithfulness of $G'$ on the redaction prompts. It is defined as the fraction of samples $\{G'(z|c)\}$ such that $\mathrm{dist}(G'(z|c),G(z|\hat{c}))<\mathrm{dist}(G'(z|c),G(z|c))$, where $\mathrm{dist}$ is $\ell_2$ distance in the Inception-V3 feature space \citep{szegedy2016rethinking}.
    \item \rebuttal{A modified R-precision score $\mR_{r}$ measures how well $G'(z|c)$ matches the target caption $\hat{c}$. \citep{xu2018attngan} defined correlation $\mathrm{corr}(x,c)$ between sample $x$ and caption $c$ as $\cos\langle\mathrm{Enc}_{\mathrm{CNN}}(x),\mathrm{Enc}_{\mathrm{RNN}}(c)\rangle$ for pretrained CNN (image) and RNN (text) encoders. We use the pretrained encoders from DM-GAN.} 
    Then, $\mR_{r}$ is defined as the fraction of samples $G'(z|c)$ such that $\mathrm{corr}(G'(z|c),\hat{c})$ is larger than the correlation between $G'(z|c)$ and 100 random, mismatch captions.
    \item We further introduce $\mR_{c/\hat{c}}$, which measures how much better $G'(z|c)$ matches $\hat{c}$ than $c$. It is defined as the fraction of samples $G'(z|c)$ such that $\mathrm{corr}(G'(z|c),\hat{c}) > \mathrm{corr}(G'(z|c),c)$.
\end{enumerate}

\textbf{Results.}
The results for redacting \texttt{yellow} and \texttt{red} shown in Table \ref{tab: GAN redact config}. The base configuration already achieves good redaction and generation quality. After improving capacity, we find all redaction quality metrics increase by $2.3\sim2.7\%$, and generation quality is retained. After we fix the variance prediction in $H_1$, the redaction decrease by $\sim1\%$, but the generation quality on valid prompts increases by $0.1$. Finally, by performing $\lambda$ annealing, all metrics improve. Notably, $\mR_{G(\cdot|c/\hat{c})}$ and $\mR_{c/\hat{c}}$ increase by over $5\%$, indicating generated samples are more similar to $\hat{c}$ rather than $c$. 

We find the Rewriting baselines achieve better IS. However, generated samples are blurred and lack sharp edges as shown in the visualization. The redaction quality of Rewriting has a significant gap with ours: all redaction metrics are less than half of ours. Especially, $\mR_{r}$ is worse than the pre-trained model, indicating generated samples conditioned on redacted prompts are not very correlated to $\hat{c}$. We hypothesize the main problem for Rewriting is that it is crafted for 2D convolutions and edits the main generative network, which makes it hard to handle and distinguish the information from different prompts. In terms of different choices of resolutions, we find rewriting the layer at resolution $8\times8$ yields the best redaction quality.

\begin{table*}[!t]
    \centering
    \caption{Generation and redaction quality after redacting \texttt{yellow} and \texttt{red}. Our method achieves significantly better redaction quality than Rewriting and retains good generation quality. The effects of each component within our method are displayed.}
    
    \begin{tabular}{lc|cc|ccc|c}
    \toprule 
    
    \multicolumn{2}{c|}{\multirow{2}{*}{Method}} & \multicolumn{2}{c|}{Inception Score ($\uparrow$)} & \multicolumn{3}{c|}{Redacting quality ($\uparrow$)} & \rebuttal{Training time} \\ 
    & & redacted & valid & $\mR_{G(\cdot|c/\hat{c})}$ & $\mR_{c/\hat{c}}$ & $\mR_{r}$ & \rebuttal{mins} \\ \hline

    \multicolumn{2}{l|}{Pre-trained} & $4.62$ & $5.22$ & $0\%$ & $6.0\%$ & $13.5\%$ & - \\ \hline

    \multirow{4}{*}{Rewriting} & $8\times8$ & $5.57$ & $5.52$ & $\textit{33.0}\%$ & $\textit{39.7}\%$ & $\textit{5.0}\%$ & \rebuttal{$24.3$} \\
    & $16\times16$ & $5.63$ & $5.53$ & $30.4\%$ & $37.2\%$ & $4.8\%$ & \rebuttal{$25.3$} \\
    & $32\times32$ & $5.72$ & $5.71$ & $28.8\%$ & $35.9\%$ & $4.7\%$ & \rebuttal{$23.5$} \\
    & $64\times64$ & $\mathbf{5.77}$ & $\mathbf{5.73}$ & $27.5\%$ & $35.2\%$ & $4.6\%$ & \rebuttal{$24.1$} \\
    \hline
    
    \multicolumn{2}{l|}{Ours (base)} & $4.79$ & $5.23$ & $65.1\%$ & $77.0\%$ & $46.9\%$ & \rebuttal{$27.4$} \\ 
    \multicolumn{2}{l|}{~~+ improved capacity} & $4.74$ & $5.25$ & $67.8\%$ & $79.7\%$ & $\mathbf{49.2}\%$ & \rebuttal{$28.4$} \\ 
    \multicolumn{2}{l|}{~~~~+ fix variance} & $4.79$ & $5.35$ & $66.5\%$ & $79.0\%$ & $48.4\%$ & \rebuttal{$22.5$} \\ 
    \multicolumn{2}{l|}{~~~~~~+ $\lambda$ annealing} & $\textit{4.84}$ & $\textit{5.36}$ & $\mathbf{72.2}\%$ & $\mathbf{84.2}\%$ & $\mathbf{49.2}\%$ & \rebuttal{$29.9$} \\ 
    
    \bottomrule
    \end{tabular}
    \label{tab: GAN redact config}
\end{table*}

Table \ref{tab: GAN redact base} includes results for redacting the other prompts. The Rewriting baseline is applied to $8\times8$ resolution in $H_1$ because it yields the best redaction quality. We find the base configuration of our method is already very effective. Our method greatly outperforms Rewriting in all redaction quality metrics and keeps good generation quality. 

\textbf{Visualization}. See Appendix \ref{appendix: text2img exp: vis} for generated samples. The Rewriting baseline generate very blurry samples while our method generates high-quality, sharp samples which also satisfy the redaction requirements. 

\textbf{Computation}.
Data redaction takes about 30 minutes to train on a single NVIDIA 3080 GPU. 

\begin{table*}[!t]
    \centering
    \caption{Generation and redaction quality after redacting various words or phrases. Our method achieves significantly better redaction quality than Rewriting and retains good generation quality.}
    
    \begin{tabular}{cc|cc|ccc|c}
    \toprule 
    
    \multirow{2}{*}{Redaction prompts} & \multirow{2}{*}{Method} & \multicolumn{2}{c|}{Inception Score ($\uparrow$)} & \multicolumn{3}{c|}{Redacting quality ($\uparrow$)} & \rebuttal{Training time} \\ 
    & & redacted & valid & $\mR_{G(\cdot|c/\hat{c})}$ & $\mR_{c/\hat{c}}$ & $\mR_{r}$ & \rebuttal{mins}\\ \hline
    
    \multirow{3}{*}{\texttt{~long beak, white belly}} 
    & Pre-trained & $4.14$ & $5.61$ & $0\%$ & $5.2\%$ & $13.1\%$ & - \\
    & Rewriting & $\mathbf{5.36}$ & $\mathbf{5.85}$ & $32.6\%$ & $51.4\%$ & $5.6\%$ & \rebuttal{$23.0$} \\ 
    & Ours (base) & $4.91$ & $5.81$ & $\mathbf{70.5}\%$ & $\mathbf{83.6}\%$ & $\mathbf{50.1}\%$ & \rebuttal{$28.3$} \\ 
    \hline
    
    \multirow{3}{*}{\texttt{blue / red wings}}
    & Pre-trained & $3.97$ & $5.48$ & $0\%$ & $4.1\%$ & $13.1\%$ & - \\
    & Rewriting & $\mathbf{5.21}$ & $\mathbf{5.85}$ & $27.8\%$ & $15.1\%$ & $6.9\%$ & \rebuttal{$23.3$} \\ 
    & Ours (base) & $5.04$ & $5.28$ & $\mathbf{68.6}\%$ & $\mathbf{71.7}\%$ & $\mathbf{58.4}\%$ & \rebuttal{$28.1$} \\ 
    \hline
    
    \multirow{3}{*}{\texttt{blue}}
    & Pre-trained & $3.65$ & $5.18$ & $0\%$ & $3.2\%$ & $7.2\%$ & - \\
    & Rewriting & $\mathbf{5.00}$ & $\mathbf{5.45}$ & $61.8\%$ & $60.2\%$ & $17.7\%$ & \rebuttal{$28.7$} \\
    & Ours (base) & $3.85$ & $5.21$ & $\mathbf{81.3}\%$ & $\mathbf{89.7}\%$ & $\mathbf{66.2}\%$ & \rebuttal{$34.7$} \\ 
    
    \bottomrule
    \end{tabular}
    \label{tab: GAN redact base}
\end{table*}

\textbf{Robustness to adversarial prompting.}
In order to understand whether adversarial prompts may cause the redacted model to generate content we would like to redact, we perform an adversarial prompting attack to redacted or rewritten models in this section. Specifically, we adopt the Square Attack \citep{andriushchenko2020square,maus2023adversarial} directly to the discrete text space. For $c\in\mC_{\Omega}$, the goal is to find an adversarial conditional $c_{\mathrm{adv}}$ such that $\mathrm{corr}(G'(z|c_{\mathrm{adv}}), c) > \mathrm{corr}(G'(z|c_{\mathrm{adv}}), \hat{c})$. The algorithm is illustrated in Algorithm \ref{alg: adv}. See Appendix \ref{appendix: text2img exp: adv} for a few examples of successful attacks. 

We measure the success rates of the proposed attack in Table \ref{tab: GAN redact adv prompt}. The success rates for our redaction method is consistently lower than the Rewriting baseline (by $31\%\sim45\%$), indicating our method is considerably more robust to adversarial prompting attacks than Rewriting. 

\begin{algorithm*}[!t]
    \centering
    \caption{Adversarial Prompting via Square Attack \citep{andriushchenko2020square,maus2023adversarial}}
    \label{alg: adv}
    \begin{algorithmic}[1]
        \STATE Initialize $c_{\mathrm{adv}}=c$.
        \FOR{iteration = $1,\cdots,16$}
        \STATE Uniformly sample a position $s$ of the caption $c_{\mathrm{adv}}$ to update.
        \STATE Uniformly sample 32 candidate words from the token dictionary. Construct 32 candidate adversarial captions by replacing the $s$-th token of $c_{\mathrm{adv}}$ with these words, respectively.
        \STATE Update the adversarial caption $c_{\mathrm{adv}}$ with the one with the largest $\mathrm{sim}(G'(z|c_{\mathrm{adv}}), c)$. 
        \ENDFOR
        \STATE \textbf{return} $c_{\mathrm{adv}}$
    \end{algorithmic}
\end{algorithm*}

\begin{table*}[!t]
    \centering
    \caption{Success rates of the adversarial prompting attack (Algorithm \ref{alg: adv}) to our redaction method and the Rewriting baseline. Our redaction method is more robust to such attacks than Rewriting.}
    
    \begin{tabular}{ccc}
    \toprule 

    Redaction prompts & Method & Attack Success Rate $(\downarrow)$ \\ \hline
    
    \multirow{2}{*}{\texttt{long beak,white belly}} 
    & Rewriting & $92.8\%$ \\ 
    & Ours (base) & $\mathbf{50.3}\%$ \\ 
    \hline
    
    \multirow{2}{*}{\texttt{blue / red wings}}
    & Rewriting & $97.4\%$ \\ 
    & Ours (base) & $\mathbf{65.7}\%$ \\ 
    \hline
    
    \multirow{2}{*}{\texttt{blue}}
    & Rewriting & $81.1\%$ \\ 
    & Ours (base) & $\mathbf{35.5}\%$ \\ 
    \hline

    \multirow{2}{*}{\texttt{yellow, red}}
    & Rewriting & $95.5\%$ \\ 
    & Ours (base) & $\mathbf{59.9}\%$ \\ 
    
    \bottomrule
    \end{tabular}
    
    \label{tab: GAN redact adv prompt}
\end{table*}

\subsection{Redacting Diffusion-based Text-to-Speech Models}

\textbf{Setup.}
We use the pre-trained DiffWave model \citep{kong2021diffwave} trained on the LJSpeech dataset \citep{Ito2017ljspeech}, which contains 13100 utterances from a female speaker reading books in home environment. The model is conditioned on Mel-spectrogram.
We redact unseen voices from the disjoint LibriTTS dataset \citep{zen2019libritts}. We randomly choose five voices to redact: speakers \texttt{125, 1578, 1737, 1926} (female's voice) and \texttt{1040} (men's voice). The training set for each voice has total lengths between 4 and 6 minutes. 

Table \ref{tab: LibriTTS split stat} includes the specific train-test splits of the LibriTTS voices. Note that for \texttt{speaker 1040} there is only one chapter id, so we split based on the segment id shown in columns. 

\begin{table*}[!t]
    \centering
    \caption{Specific train-test splits of the LibriTTS voices, and their total lengths measured in minutes.}
    
    \begin{tabular}{l|cc|cc}
    \toprule 
    \multirow{2}{*}{Redaction voices} & \multicolumn{2}{c|}{training} & \multicolumn{2}{c}{test} \\ 
    & chapter id & total length & chapter id & total length \\ \hline
    
    {\texttt{speaker 125}} & 121124 & 5.89 & 121342 & 2.30 \\ 
    {\texttt{speaker 1578}} & 140045, 140049 & 4.81 & 6379 & 1.30 \\ 
    {\texttt{speaker 1737}} & 142397, 148989, 142396 & 3.75 & 146161 & 2.51 \\
    {\texttt{speaker 1926}} &147979, 147987 & 5.44 & 143879 & 1.98 \\
    {\texttt{speaker 1040}} & 133433 (0-98) & 4.65 & 133433 (100-168) & 2.35 \\
    
    \bottomrule
    \end{tabular}
    
    \label{tab: LibriTTS split stat}
\end{table*}

\textbf{CycleGAN-VC2, Whisper, and Tortoise-TTS details}.
We train CycleGAN-VC2 \citep{kaneko2019cyclegan} with the following code \footnote{\url{https://github.com/jackaduma/CycleGAN-VC2}}.
The training data for CycleGAN-VC2 is the training data of a LibriTTS voice and the first 100 samples of LJ003 from LJSpeech \footnote{These equals $\sim1\%$ of training utterances from LJSpeech ($\sim11$ minutes).}. We train CycleGAN-VC2 for 1000 iterations with a batch size of 8. We use the medium-sized English-only Whisper model \footnote{\url{https://github.com/openai/whisper}} and the Tortoise-TTS model \footnote{\url{https://github.com/neonbjb/tortoise-tts}}. To sample from Tortoise-TTS we use two 10-second utterances from LJSpeech as the reference voice.

\textbf{Architecture and optimization.}
The architecture of the pre-trained model and other details are in Appendix \ref{appendix: tts exp}. The architecture of student conditioning networks with improved capacity is shown in Fig. \ref{fig: DiffWave H}.
We use the Adam optimizer with a learning rate $0.001$ to optimize the $\ell_1$ loss. The redaction algorithm terminates at 80000 iterations. We use a batch size of 32. Our distillation algorithm is trained with the spectrogram data only.

\textbf{Configurations.}
We first use the uniform parallel distillation loss with $\lambda=1.5$. We fix all up-sampling layers and denote it as the base configuration. We then use the spectrogram-rewriting module to improve capacity. 
Next, we improve voice cloning with Whisper and Tortoise-TTS when training CycleGAN-VC2.
Finally, we investigate non-uniform distillation losses in Table \ref{tab: tts distill non-uniform schedules}, where we set $\alpha=0.001$ and $\beta=0.01$ so that all $w_i$'s or $\lambda_i$'s have the same order or magnitude. 

\textbf{Evaluation metrics.}
To evaluate generation quality on the training voice $\mC\setminus\mC_{\Omega}$, we compute the following two speech quality metrics on the test set of LJSpeech: Perceptual Evaluation of Speech Quality (PESQ) \citep{PESQ2001} and Short-Time Objective Intelligibility (STOI) \citep{taal2011algorithm}.
To evaluate redaction quality, we train a speaker classifier between redacted and training voices in each experiment. We extract Mel-frequency cepstral coefficients \citep{xu2005hmm}, spectral contrast \citep{jiang2002music}, and chroma features \citep{ellis2007chroma} as sample features and train a support vector classifier. We then compute the recall rate of redacted voices after we perform redaction. In contrast to the standard classification, a lower recall rate means a higher fraction of redacted voices are projected to the training voice by the edited model, which indicates better redaction quality.  
See Appendix \ref{appendix: tts exp: eval} for details of these metrics.

\begin{table*}[!t]
    \centering
    \caption{Results of generation and redaction quality for redacting the man speaker \texttt{1040} in LibriTTS. The $\lambda_i$-order schedule in the non-uniform distillation losses leads to the best overall performance. The effects of each component within our method are displayed.}
    
    \begin{tabular}{ll|cc|c}
    \toprule 
    
    \multicolumn{2}{l|}{\multirow{2}{*}{Method}} & \multicolumn{2}{c|}{Speech quality (LJSpeech)} & \multirow{2}{*}{Recall ($\mC_{\Omega}$) ($\downarrow$)} \\ 
    & & PESQ ($\uparrow$) & STOI ($\uparrow$) & \\ \hline 
    \multicolumn{2}{l|}{Pre-trained} & $3.33$ & $97.8\%$ & - \\ \hline
    \multicolumn{2}{l|}{base} & $2.85$ & $95.7\%$ & $52\%$ \\ 
    \multicolumn{2}{l|}{~~+ improved capacity} & $3.03$ & $96.6\%$ & $35\%$ \\ 
    \multicolumn{2}{l|}{~~~~+ improved voice cloning} & $3.02$ & $96.6\%$ & $35\%$ \\ \hdashline
    \multirow{4}{*}{~~~~~~+ non-uniform} & $\lambda_i$-order & $\mathbf{3.23}$ & $\mathbf{97.4}\%$ & $40\%$ \\ 
    & $\lambda_i$-dilation & $3.21$ & $\mathbf{97.4}\%$ & $50\%$ \\ 
    & $w_i$-order & $3.02$ & $96.6\%$ & $\mathbf{29}\%$ \\ 
    & $w_i$-dilation & $3.02$ & $96.6\%$ & $30\%$ \\ 
    
    \bottomrule
    \end{tabular}
    \label{tab: diffwave redact config}
\end{table*}

\begin{table*}[!t]
    \centering
    \caption{Results of generation and redaction quality for redacting several female speakers in LibriTTS. The improved capacity configuration leads to the best overall performance in most settings, with an exception for speaker \texttt{1926} where both configurations lead to similar performance.}
    
    \begin{tabular}{ll|cc|c}
    \toprule 
    
    Redaction & \multirow{2}{*}{Method} & \multicolumn{2}{c|}{Speech quality (LJSpeech)} & \multirow{2}{*}{Recall ($\mC_{\Omega}$) ($\downarrow$)} \\ 
    voices & & PESQ ($\uparrow$) & STOI ($\uparrow$) & \\ \hline 

    & Pre-trained & $3.33$ & $97.8\%$ & - \\ \hline
    \multirow{2}{*}{\texttt{speaker 125}} & base & $3.14$ & $97.0\%$ & $\mathbf{0}\%$ \\ 
    & ~~+ improved capacity & $\mathbf{3.27}$ & $\mathbf{97.4}\%$ & $3\%$ \\ \hline 
    \multirow{2}{*}{\texttt{speaker 1578}} & base & $2.14$ & $94.4\%$ & $\mathbf{1}\%$ \\ 
    & ~~+ improved capacity & $\mathbf{3.24}$ & $\mathbf{97.4}\%$ & $3\%$ \\\hline 
    \multirow{2}{*}{\texttt{speaker 1737}} & base & $2.49$ & $94.9\%$ & $\mathbf{4}\%$ \\
    & ~~+ improved capacity & $\mathbf{3.24}$ & $\mathbf{97.2}\%$ & $9\%$ \\\hline 
    \multirow{2}{*}{\texttt{speaker 1926}} & base & $\mathbf{3.06}$ & $96.3\%$ & $\mathbf{16}\%$ \\
    & ~~+ improved capacity & $3.04$ & $\mathbf{96.6}\%$ & $\mathbf{16}\%$ \\
    
    \bottomrule
    \end{tabular}
    \label{tab: diffwave redact base}
\end{table*}

\textbf{Results.}
The results for redacting speaker \texttt{1040} are shown in Table \ref{tab: diffwave redact config}. With the base configuration we can redact a fraction of conditionals but the generation quality is much worse than the pre-trained model. By improving capacity both generation and redaction quality are improved. Improved voice cloning does not increase the quantitative metrics, but we find the generation quality is perceptually slightly better. The non-uniform distillation losses have a huge impact on the results. The $\lambda_i$-order and $\lambda_i$-dilation schedules can boost generation quality by a large gap without compensating redaction quality too much. The $w_i$-order and $w_i$-dilation schedules can improve redaction quality while keeping the generation quality. As high generation quality is very important for speech synthesis (on non-redacted voices), the $\lambda_i$-order schedule leads to the best overall performance.

The results for redacting other speakers are shown in Table \ref{tab: diffwave redact base}. In most settings the improved capacity configuration leads to much better generation quality than the base configuration with very little compensation for redaction quality, except for speaker \texttt{1926} where results are similar.

\textbf{Computation}.
On a single NVIDIA 3080 GPU, it takes less than 60 minutes to distill with the base configuration, and around 100 minutes with the other configurations. It takes around 2 hours to train the CycleGAN-VC2 model. As a comparison, DiffWave takes days to train on 8 GPUs. 

\textbf{Demo.}
We include audio samples in our demo website: \url{https://dataredact2023.github.io/}.

\section{Conclusion and Discussion}

In this paper, we introduce a formal \rebuttal{statistical machine learning} framework for redacting data from conditional generative models, and present a computationally efficient method that only involves the conditioning networks. We introduce explicit formula for simple models, and propose distillation-based methods for practical conditional models.
Empirically, our method performs well for practical text-to-image/speech models. It is computationally efficient, and can effectively redact certain conditionals while retaining high generation quality. 
For redacting prompts in text-to-image models, our method redacts better and is considerably more robust than the baseline methods. 
For redacting voices in text-to-speech models, our method can redact both similar and different voices while retaining high speech quality and intelligibility. 

\rebuttal{In the following we include discussion on guaranteed safety, adversarial robustness, limitations of our method, and future work.} 

\rebuttal{
\paragraph{Guaranteed Safety and Fine-tuning}
We first note that complete redaction to zero probability mass may be impossible for generative models with infinite support (as most deep generative models are). Take the unconditional normalizing flow as an example. We have the following proposition:
\begin{proposition}
    Let an invertible and smooth function $F:\mathbb{R}^d\rightarrow\mathbb{R}^d$ be an unconditional normalizing flow on the data space $\mathbb{R}^d$ that converts a standard Gaussian $\mN$ to the output distribution $F_{\#}\mN$. For any set $\mX$ that has non-zero measure on $\mathbb{R}^d$, the probability mass on $\mX$, $(F_{\#}\mN)(\mX)$, is positive.
\end{proposition}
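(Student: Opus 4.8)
The plan is to reduce the claim to two elementary facts: that a pushforward measure satisfies $(F_{\#}\mN)(\mX)=\mN(F^{-1}(\mX))$, and that a smooth bijection cannot turn a set of positive Lebesgue measure into a Lebesgue-null set. First I would observe that, by inner regularity of Lebesgue measure, $\mX$ contains a closed (indeed compact) subset $K$ with positive Lebesgue measure; since $(F_{\#}\mN)(\mX)\ge(F_{\#}\mN)(K)$, it suffices to prove the statement for $K$ in place of $\mX$, and then $F^{-1}(K)$ is closed (hence Borel, hence $\mN$-measurable) because $F$ is continuous. By definition of the pushforward, $(F_{\#}\mN)(K)=\mN(F^{-1}(K))$, and since the standard Gaussian has density $(2\pi)^{-d/2}e^{-\|u\|^2/2}$ which is strictly positive at every point of $\mathbb{R}^d$, we have $\mN(A)=\int_A (2\pi)^{-d/2}e^{-\|u\|^2/2}\,du>0$ for every Borel set $A$ of positive Lebesgue measure. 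So the whole proof comes down to showing $\mathrm{Leb}(F^{-1}(K))>0$.

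For that step I would argue by contradiction. Suppose $F^{-1}(K)$ is Lebesgue-null. Since $F$ is smooth it is locally Lipschitz, and locally Lipschitz maps send Lebesgue-null sets to Lebesgue-null sets (cover the null set by countably many bounded pieces on which $F$ is Lipschitz with some constant $L$, and use that a Lipschitz image of a null set has measure $\le L^d$ times $0$). Because $F$ is a bijection, $K=F(F^{-1}(K))$, so $K$ would be Lebesgue-null, contradicting $\mathrm{Leb}(K)>0$. Hence $\mathrm{Leb}(F^{-1}(K))>0$, and combining with the previous paragraph gives $(F_{\#}\mN)(\mX)\ge\mN(F^{-1}(K))>0$. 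As an alternative that leans on the normalizing-flow structure directly, one can instead invoke the change-of-variables formula $\mathrm{Leb}(K)=\int_{F^{-1}(K)}|\det DF(u)|\,du$, from which $F^{-1}(K)$ cannot be null when $\mathrm{Leb}(K)>0$.

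The only point that requires genuine care is the measure-theoretic step that a smooth — equivalently, locally Lipschitz — map does not create positive measure from a null set; the reduction to a closed subset and the positivity of the Gaussian density are bookkeeping. It is worth noting that the argument uses smoothness of $F$ only through local Lipschitzness and uses invertibility only through bijectivity on the relevant sets, so the conclusion is quite robust: no deep generative model realized by such an $F$ can assign zero probability to any positive-measure region of output space, which is exactly the impossibility of \emph{exact} redaction referred to before the proposition.
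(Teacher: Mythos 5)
Your proof is correct and takes essentially the same route as the paper's: rewrite $(F_{\#}\mN)(\mX)$ as $\mN(F^{-1}(\mX))$ via the pushforward/change-of-variables identity and conclude from strict positivity of the Gaussian density. The only difference is that you rigorously justify the one step the paper asserts without argument --- that $F^{-1}(\mX)$ has positive Lebesgue measure --- by noting that a locally Lipschitz bijection cannot map a null set onto a set of positive measure (together with the inner-regularity reduction to a compact subset for measurability), which is a worthwhile tightening but not a different approach.
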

\begin{proof}
    \begin{align*}
        (F_{\#}\mN)(\mX) & = \int_{x\in\mX}(F_{\#}\mathcal{N})(x)dx \\
        & = \int_{z\in F^{-1}(\mX)}\mathcal{N}(z)dz \\
        & = \mN(F^{-1}(\mX)).
    \end{align*}
    Because $\mX$ has positive measure and $F$ is invertible and smooth, $F^{-1}(\mX)$ has positive measure. Because $\mN$ is positive, $\mN(F^{-1}(\mX))>0$, and therefore $(F_{\#}\mN)(\mX)>0$.
\end{proof}
}

\rebuttal{
Furthermore, \citep{pham2023circumventing} discovered that fine-tuning on a set of in appropriate samples can break many mitigation methods for text-to-image models. We believe there is an \textit{impossibility result} – if the adversary have access to a dataset of inappropriate samples and fine-tune on it, there is nothing a learner can do to prevent this. In the previous normalizing flow example, the adversary can optimize the following objective: 
$$
\begin{array}{l}
\arg\max_F\mathbb{E}_{x\sim \mX}\log [(F_{\#}\mN)(x)] 
 \\
 = \arg\max_F\mathbb{E}_{z\sim F^{-1}(\mX)}\log[\mN(z)/|\mathrm{det}\nabla_z F(z)|] \\
 = \arg\min_F \mathbb{E}_{z\sim F^{-1}(\mX)}(\|z\|_2^2/2+\log|\mathrm{det}\nabla_z F(z)|) 
\end{array}
$$
and as a result having more probability mass on $\mX$.
}

\rebuttal{
Despite this impossibility result, our method has largely increased the barrier for users to generate undesirable contents. For example, if the adversary do not have enough data of a certain celebrity’s voice they are then not able to reverse engineering the network by fine-tuning on those data. In practice, it is usually necessary to combine different security mechanisms to ensure safety of generation. 
}

\rebuttal{\paragraph{Adversarial Robustness}
We have shown our method is less susceptible to be attacked by an existing adversarial prompting method than the baseline method in the text-to-image experiments. However, we would like to note that a formal definition of adversarial robustness in conditional generative models (e.g. text-to-X) is a largely open problem. We think many different threat models can be defined depending on the setting and adversary’s goal, capabilities, and knowledge of the model, which is outside the scope of this paper.
}

\rebuttal{\paragraph{Limitations}
There are certain types of neural networks that our method cannot be easily adapted to, especially when the conditional network is not completely independent from the main generative network. Examples include StyleGAN \citep{karras2020analyzing}, Transformer-based architectures with complex cross-attention layers, or multi-modal networks that mix input tokens from different modalities at the beginning. 
}

\rebuttal{
\paragraph{Future work}
One important future direction is to further improve robustness against adversarial attacks. 
Another line of future work is to apply the proposed method to Transformer-based architectures, where the conditioning networks are based on cross-attention blocks.
A third direction is to extend our method to the online setting where redacted samples come in a stream. To achieve this, we need to modify the loss in \eqref{eq: general loss} by using a weighted sampling strategy that assigns higher probability to newly seen samples.
}

\section*{Acknowledgements}
This work was supported by NSF under CNS 1804829 and ARO MURI W911NF2110317.

~~\newpage
~~\newpage
\bibliographystyle{IEEEtran}
\bibliography{main}

\begin{thebibliography}{10}
\providecommand{\url}[1]{#1}
\csname url@samestyle\endcsname
\providecommand{\newblock}{\relax}
\providecommand{\bibinfo}[2]{#2}
\providecommand{\BIBentrySTDinterwordspacing}{\spaceskip=0pt\relax}
\providecommand{\BIBentryALTinterwordstretchfactor}{4}
\providecommand{\BIBentryALTinterwordspacing}{\spaceskip=\fontdimen2\font plus
\BIBentryALTinterwordstretchfactor\fontdimen3\font minus
  \fontdimen4\font\relax}
\providecommand{\BIBforeignlanguage}[2]{{%
\expandafter\ifx\csname l@#1\endcsname\relax
\typeout{** WARNING: IEEEtran.bst: No hyphenation pattern has been}%
\typeout{** loaded for the language `#1'. Using the pattern for}%
\typeout{** the default language instead.}%
\else
\language=\csname l@#1\endcsname
\fi
#2}}
\providecommand{\BIBdecl}{\relax}
\BIBdecl

\bibitem{rombach2021highresolution}
R.~Rombach, A.~Blattmann, D.~Lorenz, P.~Esser, and B.~Ommer, ``High-resolution
  image synthesis with latent diffusion models,'' 2021.

\bibitem{ramesh2021zero}
A.~Ramesh, M.~Pavlov, G.~Goh, S.~Gray, C.~Voss, A.~Radford, M.~Chen, and
  I.~Sutskever, ``Zero-shot text-to-image generation,'' in \emph{International
  Conference on Machine Learning}.\hskip 1em plus 0.5em minus 0.4em\relax PMLR,
  2021, pp. 8821--8831.

\bibitem{ramesh2022hierarchical}
A.~Ramesh, P.~Dhariwal, A.~Nichol, C.~Chu, and M.~Chen, ``Hierarchical
  text-conditional image generation with clip latents,'' \emph{arXiv preprint
  arXiv:2204.06125}, 2022.

\bibitem{sauer2023stylegan}
A.~Sauer, T.~Karras, S.~Laine, A.~Geiger, and T.~Aila, ``Stylegan-t: Unlocking
  the power of gans for fast large-scale text-to-image synthesis,'' \emph{arXiv
  preprint arXiv:2301.09515}, 2023.

\bibitem{kong2021diffwave}
\BIBentryALTinterwordspacing
Z.~Kong, W.~Ping, J.~Huang, K.~Zhao, and B.~Catanzaro, ``Diffwave: A versatile
  diffusion model for audio synthesis,'' in \emph{International Conference on
  Learning Representations}, 2021. [Online]. Available:
  \url{https://openreview.net/forum?id=a-xFK8Ymz5J}
\BIBentrySTDinterwordspacing

\bibitem{lee2023bigvgan}
S.-g. Lee, W.~Ping, B.~Ginsburg, B.~Catanzaro, and S.~Yoon, ``Bigvgan: A
  universal neural vocoder with large-scale training,'' in \emph{International
  Conference on Learning Representations}, 2023.

\bibitem{gpt4}
OpenAI, ``Gpt-4 technical report,'' \emph{arXiv preprint arXiv:2303.08774},
  2023.

\bibitem{touvron2023llama}
H.~Touvron, T.~Lavril, G.~Izacard, X.~Martinet, M.-A. Lachaux, T.~Lacroix,
  B.~Rozi{\`e}re, N.~Goyal, E.~Hambro, F.~Azhar \emph{et~al.}, ``Llama: Open
  and efficient foundation language models,'' \emph{arXiv preprint
  arXiv:2302.13971}, 2023.

\bibitem{nichol2021glide}
A.~Nichol, P.~Dhariwal, A.~Ramesh, P.~Shyam, P.~Mishkin, B.~McGrew,
  I.~Sutskever, and M.~Chen, ``Glide: Towards photorealistic image generation
  and editing with text-guided diffusion models,'' \emph{arXiv preprint
  arXiv:2112.10741}, 2021.

\bibitem{birhane2021multimodal}
A.~Birhane, V.~U. Prabhu, and E.~Kahembwe, ``Multimodal datasets: misogyny,
  pornography, and malignant stereotypes,'' \emph{arXiv preprint
  arXiv:2110.01963}, 2021.

\bibitem{schuhmann2022laion}
C.~Schuhmann, R.~Beaumont, R.~Vencu, C.~Gordon, R.~Wightman, M.~Cherti,
  T.~Coombes, A.~Katta, C.~Mullis, M.~Wortsman \emph{et~al.}, ``Laion-5b: An
  open large-scale dataset for training next generation image-text models,''
  \emph{arXiv preprint arXiv:2210.08402}, 2022.

\bibitem{rando2022red}
J.~Rando, D.~Paleka, D.~Lindner, L.~Heim, and F.~Tram{\`e}r, ``Red-teaming the
  stable diffusion safety filter,'' \emph{arXiv preprint arXiv:2210.04610},
  2022.

\bibitem{nudenet}
\BIBentryALTinterwordspacing
P.~Bedapudi. (2022) Nudenet: Neural nets for nudity detection and censoring.
  [Online]. Available: \url{https://github.com/notAI-tech/NudeNet}
\BIBentrySTDinterwordspacing

\bibitem{man}
\BIBentryALTinterwordspacing
G.~Laborde, ``Deep nn for nsfw detection,'' 2022. [Online]. Available:
  \url{https://github.com/GantMan/nsfw\_model}
\BIBentrySTDinterwordspacing

\bibitem{Betker2022TTS}
\BIBentryALTinterwordspacing
J.~Betker. (2022, 4) {TorToiSe text-to-speech}. [Online]. Available:
  \url{https://github.com/neonbjb/tortoise-tts}
\BIBentrySTDinterwordspacing

\bibitem{wang2023neural}
C.~Wang, S.~Chen, Y.~Wu, Z.~Zhang, L.~Zhou, S.~Liu, Z.~Chen, Y.~Liu, H.~Wang,
  J.~Li \emph{et~al.}, ``Neural codec language models are zero-shot text to
  speech synthesizers,'' \emph{arXiv preprint arXiv:2301.02111}, 2023.

\bibitem{zhang2023speak}
Z.~Zhang, L.~Zhou, C.~Wang, S.~Chen, Y.~Wu, S.~Liu, Z.~Chen, Y.~Liu, H.~Wang,
  J.~Li \emph{et~al.}, ``Speak foreign languages with your own voice:
  Cross-lingual neural codec language modeling,'' \emph{arXiv preprint
  arXiv:2303.03926}, 2023.

\bibitem{pitsilis2018detecting}
G.~K. Pitsilis, H.~Ramampiaro, and H.~Langseth, ``Detecting offensive language
  in tweets using deep learning,'' \emph{arXiv preprint arXiv:1801.04433},
  2018.

\bibitem{wallace2019universal}
E.~Wallace, S.~Feng, N.~Kandpal, M.~Gardner, and S.~Singh, ``Universal
  adversarial triggers for attacking and analyzing nlp,'' \emph{arXiv preprint
  arXiv:1908.07125}, 2019.

\bibitem{mcguffie2020radicalization}
K.~McGuffie and A.~Newhouse, ``The radicalization risks of gpt-3 and advanced
  neural language models,'' \emph{arXiv preprint arXiv:2009.06807}, 2020.

\bibitem{gehman2020realtoxicityprompts}
S.~Gehman, S.~Gururangan, M.~Sap, Y.~Choi, and N.~A. Smith,
  ``Realtoxicityprompts: Evaluating neural toxic degeneration in language
  models,'' \emph{arXiv preprint arXiv:2009.11462}, 2020.

\bibitem{abid2021persistent}
A.~Abid, M.~Farooqi, and J.~Zou, ``Persistent anti-muslim bias in large
  language models,'' in \emph{Proceedings of the 2021 AAAI/ACM Conference on
  AI, Ethics, and Society}, 2021, pp. 298--306.

\bibitem{perez2022red}
E.~Perez, S.~Huang, F.~Song, T.~Cai, R.~Ring, J.~Aslanides, A.~Glaese,
  N.~McAleese, and G.~Irving, ``Red teaming language models with language
  models,'' \emph{arXiv preprint arXiv:2202.03286}, 2022.

\bibitem{schramowski2022large}
P.~Schramowski, C.~Turan, N.~Andersen, C.~A. Rothkopf, and K.~Kersting, ``Large
  pre-trained language models contain human-like biases of what is right and
  wrong to do,'' \emph{Nature Machine Intelligence}, vol.~4, no.~3, pp.
  258--268, 2022.

\bibitem{schramowski2022safe}
P.~Schramowski, M.~Brack, B.~Deiseroth, and K.~Kersting, ``Safe latent
  diffusion: Mitigating inappropriate degeneration in diffusion models,''
  \emph{arXiv preprint arXiv:2211.05105}, 2022.

\bibitem{kong2023data}
Z.~Kong and K.~Chaudhuri, ``Data redaction from pre-trained gans,'' in
  \emph{First IEEE Conference on Secure and Trustworthy Machine Learning},
  2023.

\bibitem{zhu2019dm}
M.~Zhu, P.~Pan, W.~Chen, and Y.~Yang, ``Dm-gan: Dynamic memory generative
  adversarial networks for text-to-image synthesis,'' in \emph{Proceedings of
  the IEEE/CVF conference on computer vision and pattern recognition}, 2019,
  pp. 5802--5810.

\bibitem{bau2020rewriting}
D.~Bau, S.~Liu, T.~Wang, J.-Y. Zhu, and A.~Torralba, ``Rewriting a deep
  generative model,'' in \emph{Computer Vision--ECCV 2020: 16th European
  Conference, Glasgow, UK, August 23--28, 2020, Proceedings, Part I 16}.\hskip
  1em plus 0.5em minus 0.4em\relax Springer, 2020, pp. 351--369.

\bibitem{cao2015towards}
Y.~Cao and J.~Yang, ``Towards making systems forget with machine unlearning,''
  in \emph{2015 IEEE Symposium on Security and Privacy}.\hskip 1em plus 0.5em
  minus 0.4em\relax IEEE, 2015, pp. 463--480.

\bibitem{guo2019certified}
C.~Guo, T.~Goldstein, A.~Hannun, and L.~Van Der~Maaten, ``Certified data
  removal from machine learning models,'' \emph{arXiv preprint
  arXiv:1911.03030}, 2019.

\bibitem{schelter2020amnesia}
S.~Schelter, ``Amnesia - machine learning models that can forget user data very
  fast.'' in \emph{CIDR}, 2020.

\bibitem{neel2021descent}
S.~Neel, A.~Roth, and S.~Sharifi-Malvajerdi, ``Descent-to-delete:
  Gradient-based methods for machine unlearning,'' in \emph{Algorithmic
  Learning Theory}.\hskip 1em plus 0.5em minus 0.4em\relax PMLR, 2021, pp.
  931--962.

\bibitem{sekhari2021remember}
A.~Sekhari, J.~Acharya, G.~Kamath, and A.~T. Suresh, ``Remember what you want
  to forget: Algorithms for machine unlearning,'' \emph{Advances in Neural
  Information Processing Systems}, vol.~34, 2021.

\bibitem{izzo2021approximate}
Z.~Izzo, M.~A. Smart, K.~Chaudhuri, and J.~Zou, ``Approximate data deletion
  from machine learning models,'' in \emph{International Conference on
  Artificial Intelligence and Statistics}.\hskip 1em plus 0.5em minus
  0.4em\relax PMLR, 2021, pp. 2008--2016.

\bibitem{ullah2021machine}
E.~Ullah, T.~Mai, A.~Rao, R.~A. Rossi, and R.~Arora, ``Machine unlearning via
  algorithmic stability,'' in \emph{Conference on Learning Theory}.\hskip 1em
  plus 0.5em minus 0.4em\relax PMLR, 2021, pp. 4126--4142.

\bibitem{bourtoule2021machine}
L.~Bourtoule, V.~Chandrasekaran, C.~A. Choquette-Choo, H.~Jia, A.~Travers,
  B.~Zhang, D.~Lie, and N.~Papernot, ``Machine unlearning,'' in \emph{2021 IEEE
  Symposium on Security and Privacy (SP)}.\hskip 1em plus 0.5em minus
  0.4em\relax IEEE, 2021, pp. 141--159.

\bibitem{warnecke2021machine}
A.~Warnecke, L.~Pirch, C.~Wressnegger, and K.~Rieck, ``Machine unlearning of
  features and labels,'' \emph{arXiv preprint arXiv:2108.11577}, 2021.

\bibitem{kong2022approximate}
Z.~Kong and S.~Alfeld, ``Approximate data deletion in generative models,''
  \emph{arXiv preprint arXiv:2206.14439}, 2022.

\bibitem{goodfellow2014generative}
I.~Goodfellow, J.~Pouget-Abadie, M.~Mirza, B.~Xu, D.~Warde-Farley, S.~Ozair,
  A.~Courville, and Y.~Bengio, ``Generative adversarial nets,'' \emph{Advances
  in neural information processing systems}, vol.~27, 2014.

\bibitem{bau2020semantic}
D.~Bau, H.~Strobelt, W.~Peebles, J.~Wulff, B.~Zhou, J.-Y. Zhu, and A.~Torralba,
  ``Semantic photo manipulation with a generative image prior,'' \emph{arXiv
  preprint arXiv:2005.07727}, 2020.

\bibitem{ho2020denoising}
J.~Ho, A.~Jain, and P.~Abbeel, ``Denoising diffusion probabilistic models,''
  \emph{Advances in Neural Information Processing Systems}, vol.~33, pp.
  6840--6851, 2020.

\bibitem{bar2022text2live}
O.~Bar-Tal, D.~Ofri-Amar, R.~Fridman, Y.~Kasten, and T.~Dekel, ``Text2live:
  Text-driven layered image and video editing,'' in \emph{Computer Vision--ECCV
  2022: 17th European Conference, Tel Aviv, Israel, October 23--27, 2022,
  Proceedings, Part XV}.\hskip 1em plus 0.5em minus 0.4em\relax Springer, 2022,
  pp. 707--723.

\bibitem{hertz2022prompt}
A.~Hertz, R.~Mokady, J.~Tenenbaum, K.~Aberman, Y.~Pritch, and D.~Cohen-Or,
  ``Prompt-to-prompt image editing with cross attention control,'' \emph{arXiv
  preprint arXiv:2208.01626}, 2022.

\bibitem{kawar2022imagic}
B.~Kawar, S.~Zada, O.~Lang, O.~Tov, H.~Chang, T.~Dekel, I.~Mosseri, and
  M.~Irani, ``Imagic: Text-based real image editing with diffusion models,''
  \emph{arXiv preprint arXiv:2210.09276}, 2022.

\bibitem{valevski2022unitune}
D.~Valevski, M.~Kalman, Y.~Matias, and Y.~Leviathan, ``Unitune: Text-driven
  image editing by fine tuning an image generation model on a single image,''
  \emph{arXiv preprint arXiv:2210.09477}, 2022.

\bibitem{brack2022stable}
M.~Brack, P.~Schramowski, F.~Friedrich, D.~Hintersdorf, and K.~Kersting, ``The
  stable artist: Steering semantics in diffusion latent space,'' \emph{arXiv
  preprint arXiv:2212.06013}, 2022.

\bibitem{asokan2020teaching}
S.~Asokan and C.~Seelamantula, ``Teaching a gan what not to learn,''
  \emph{Advances in Neural Information Processing Systems}, vol.~33, pp.
  3964--3975, 2020.

\bibitem{sinha2021negative}
A.~Sinha, K.~Ayush, J.~Song, B.~Uzkent, H.~Jin, and S.~Ermon, ``Negative data
  augmentation,'' in \emph{International Conference on Learning
  Representations}, 2021.

\bibitem{cherepkov2021navigating}
A.~Cherepkov, A.~Voynov, and A.~Babenko, ``Navigating the gan parameter space
  for semantic image editing,'' in \emph{Proceedings of the IEEE/CVF conference
  on computer vision and pattern recognition}, 2021, pp. 3671--3680.

\bibitem{malnick2022taming}
S.~Malnick, S.~Avidan, and O.~Fried, ``Taming a generative model,'' \emph{arXiv
  preprint arXiv:2211.16488}, 2022.

\bibitem{moon2023feature}
S.~Moon, S.~Cho, and D.~Kim, ``Feature unlearning for generative models via
  implicit feedback,'' \emph{arXiv preprint arXiv:2303.05699}, 2023.

\bibitem{gandikota2023erasing}
R.~Gandikota, J.~Materzynska, J.~Fiotto-Kaufman, and D.~Bau, ``Erasing concepts
  from diffusion models,'' \emph{arXiv preprint arXiv:2303.07345}, 2023.

\bibitem{gandikota2023unified}
R.~Gandikota, H.~Orgad, Y.~Belinkov, J.~Materzy{\'n}ska, and D.~Bau, ``Unified
  concept editing in diffusion models,'' \emph{arXiv preprint
  arXiv:2308.14761}, 2023.

\bibitem{zhang2023forget}
E.~Zhang, K.~Wang, X.~Xu, Z.~Wang, and H.~Shi, ``Forget-me-not: Learning to
  forget in text-to-image diffusion models,'' \emph{arXiv preprint
  arXiv:2303.17591}, 2023.

\bibitem{heng2023selective}
A.~Heng and H.~Soh, ``Selective amnesia: A continual learning approach to
  forgetting in deep generative models,'' \emph{arXiv preprint
  arXiv:2305.10120}, 2023.

\bibitem{kumari2023ablating}
N.~Kumari, B.~Zhang, S.-Y. Wang, E.~Shechtman, R.~Zhang, and J.-Y. Zhu,
  ``Ablating concepts in text-to-image diffusion models,'' in \emph{Proceedings
  of the IEEE/CVF International Conference on Computer Vision}, 2023, pp.
  22\,691--22\,702.

\bibitem{ho2022classifier}
J.~Ho and T.~Salimans, ``Classifier-free diffusion guidance,'' \emph{arXiv
  preprint arXiv:2207.12598}, 2022.

\bibitem{mirza2014conditional}
M.~Mirza and S.~Osindero, ``Conditional generative adversarial nets,''
  \emph{arXiv preprint arXiv:1411.1784}, 2014.

\bibitem{CUB}
\BIBentryALTinterwordspacing
P.~Welinder, S.~Branson, T.~Mita, C.~Wah, F.~Schroff, S.~Belongie, and
  P.~Perona, ``Caltech-ucsd birds 200,'' Caltech, Tech. Rep. CNS-TR-201, 2010.
  [Online]. Available:
  \url{http://www.vision.caltech.edu/visipedia/CUB-200.html}
\BIBentrySTDinterwordspacing

\bibitem{reed2016learning}
S.~Reed, Z.~Akata, H.~Lee, and B.~Schiele, ``Learning deep representations of
  fine-grained visual descriptions,'' in \emph{Proceedings of the IEEE
  conference on computer vision and pattern recognition}, 2016, pp. 49--58.

\bibitem{xu2018attngan}
T.~Xu, P.~Zhang, Q.~Huang, H.~Zhang, Z.~Gan, X.~Huang, and X.~He, ``Attngan:
  Fine-grained text to image generation with attentional generative adversarial
  networks,'' in \emph{Proceedings of the IEEE conference on computer vision
  and pattern recognition}, 2018, pp. 1316--1324.

\bibitem{kaneko2019cyclegan}
T.~Kaneko, H.~Kameoka, K.~Tanaka, and N.~Hojo, ``Cyclegan-vc2: Improved
  cyclegan-based non-parallel voice conversion,'' in \emph{ICASSP 2019-2019
  IEEE International Conference on Acoustics, Speech and Signal Processing
  (ICASSP)}.\hskip 1em plus 0.5em minus 0.4em\relax IEEE, 2019, pp. 6820--6824.

\bibitem{radford2022robust}
A.~Radford, J.~W. Kim, T.~Xu, G.~Brockman, C.~McLeavey, and I.~Sutskever,
  ``Robust speech recognition via large-scale weak supervision,'' \emph{arXiv
  preprint arXiv:2212.04356}, 2022.

\bibitem{lecun2010mnist}
Y.~LeCun, C.~Cortes, and C.~Burges, ``Mnist handwritten digit database,''
  \emph{ATT Labs [Online]. Available: http://yann.lecun.com/exdb/mnist},
  vol.~2, 2010.

\bibitem{kingma2014adam}
D.~P. Kingma and J.~Ba, ``Adam: A method for stochastic optimization,''
  \emph{arXiv preprint arXiv:1412.6980}, 2014.

\bibitem{salimans2016improved}
T.~Salimans, I.~Goodfellow, W.~Zaremba, V.~Cheung, A.~Radford, and X.~Chen,
  ``Improved techniques for training gans,'' \emph{Advances in neural
  information processing systems}, vol.~29, 2016.

\bibitem{szegedy2016rethinking}
C.~Szegedy, V.~Vanhoucke, S.~Ioffe, J.~Shlens, and Z.~Wojna, ``Rethinking the
  inception architecture for computer vision,'' in \emph{Proceedings of the
  IEEE conference on computer vision and pattern recognition}, 2016, pp.
  2818--2826.

\bibitem{andriushchenko2020square}
M.~Andriushchenko, F.~Croce, N.~Flammarion, and M.~Hein, ``Square attack: a
  query-efficient black-box adversarial attack via random search,'' in
  \emph{Computer Vision--ECCV 2020: 16th European Conference, Glasgow, UK,
  August 23--28, 2020, Proceedings, Part XXIII}.\hskip 1em plus 0.5em minus
  0.4em\relax Springer, 2020, pp. 484--501.

\bibitem{maus2023adversarial}
N.~Maus, P.~Chao, E.~Wong, and J.~Gardner, ``Adversarial prompting for black
  box foundation models,'' \emph{arXiv preprint arXiv:2302.04237}, 2023.

\bibitem{Ito2017ljspeech}
K.~Ito, ``The {LJ} speech dataset,'' 2017.

\bibitem{zen2019libritts}
H.~Zen, V.~Dang, R.~Clark, Y.~Zhang, R.~J. Weiss, Y.~Jia, Z.~Chen, and Y.~Wu,
  ``Libritts: A corpus derived from librispeech for text-to-speech,''
  \emph{arXiv preprint arXiv:1904.02882}, 2019.

\bibitem{PESQ2001}
I.-T. Recommendation, ``Perceptual evaluation of speech quality (pesq): An
  objective method for end-to-end speech quality assessment of narrow-band
  telephone networks and speech codecs,'' \emph{Rec. ITU-T P. 862}, 2001.

\bibitem{taal2011algorithm}
C.~H. Taal \emph{et~al.}, ``An algorithm for intelligibility prediction of
  time--frequency weighted noisy speech,'' \emph{IEEE Transactions on Audio,
  Speech, and Language Processing}, 2011.

\bibitem{xu2005hmm}
M.~Xu, L.-Y. Duan, J.~Cai, L.-T. Chia, C.~Xu, and Q.~Tian, ``Hmm-based audio
  keyword generation,'' in \emph{Advances in Multimedia Information
  Processing-PCM 2004: 5th Pacific Rim Conference on Multimedia, Tokyo, Japan,
  November 30-December 3, 2004. Proceedings, Part III 5}.\hskip 1em plus 0.5em
  minus 0.4em\relax Springer, 2005, pp. 566--574.

\bibitem{jiang2002music}
D.-N. Jiang, L.~Lu, H.-J. Zhang, J.-H. Tao, and L.-H. Cai, ``Music type
  classification by spectral contrast feature,'' in \emph{Proceedings. IEEE
  International Conference on Multimedia and Expo}, vol.~1.\hskip 1em plus
  0.5em minus 0.4em\relax IEEE, 2002, pp. 113--116.

\bibitem{ellis2007chroma}
D.~Ellis, ``Chroma feature analysis and synthesis,'' \emph{Resources of
  laboratory for the recognition and organization of speech and audio-LabROSA},
  vol.~5, 2007.

\bibitem{pham2023circumventing}
M.~Pham, K.~O. Marshall, and C.~Hegde, ``Circumventing concept erasure methods
  for text-to-image generative models,'' \emph{arXiv preprint
  arXiv:2308.01508}, 2023.

\bibitem{karras2020analyzing}
T.~Karras, S.~Laine, M.~Aittala, J.~Hellsten, J.~Lehtinen, and T.~Aila,
  ``Analyzing and improving the image quality of stylegan,'' in
  \emph{Proceedings of the IEEE/CVF conference on computer vision and pattern
  recognition}, 2020, pp. 8110--8119.

\end{thebibliography}

\newpage
\onecolumn
\appendices
\section{\rebuttal{Recovering Original Score in \citep{gandikota2023erasing}}}
\label{appendix: recovering original score}

\rebuttal{
Let $\epsilon_{\theta^*}(x_t,c,t)$ be the original score and $$\epsilon_{\theta}(x_t,c,t)=\epsilon_{\theta^*}(x_t,t)-\eta(\epsilon_{\theta^*}(x_t,c,t)-\epsilon_{\theta^*}(x_t,t))$$ be the distilled score. We could recover the original score from the distilled score in the following way. First, by letting $c=\emptyset$, we have 
$$\epsilon_{\theta}(x_t,t)=\epsilon_{\theta^*}(x_t,t).$$ 
Inserting this to the right-hand-side of the definition of distilled score, one could get 
$$\epsilon_{\theta}(x_t,c,t)=\epsilon_{\theta}(x_t,t)-\eta(\epsilon_{\theta^*}(x_t,c,t)-\epsilon_{\theta}(x_t,t)).$$ 
As a result, one can recover the original score as 
$$\epsilon_{\theta^*}(x_t,c,t)=\frac{1}{\eta}((1+\eta)\epsilon_{\theta}(x_t,t)-\epsilon_{\theta}(x_t,c,t)).$$
By using the original score for sampling one may be able to generate concepts that have been erased.}

\section{Additional Details and Experiments for Redaction from DM-GAN} \label{appendix: text2img exp}

\subsection{Details of the Pre-trained Model and the Proposed Student Networks}\label{appendix: text2img exp: detail}

The high-level architecture of DM-GAN is shown in Fig. \ref{fig: DMGAN arch} and \ref{fig: DMGAN H}. The first conditioning network $H_1$ takes the sentence embedding $v_s(c)$ as input and outputs two vectors: a mean vector, and the square root of the variance vector. A re-parameterization similar to variational auto-encoders is applied to these two vectors, and the output is concatenated to the latent code. The other two conditioning networks $H_2$ and $H_3$, called the memory writing module, take two inputs: the word embeddings $v_w(c)$, and the image features of the previously generated low resolution images. The output of $H_2$ or $H_3$ then goes through the rest of the modules in the main generative network.
We use the pre-trained model and code from \url{https://github.com/MinfengZhu/DM-GAN} under MIT license. The pre-trained model takes days to train on 1 or more GPUs. 

\begin{figure}[!h]
    \centering
    \includegraphics[width=0.8\textwidth]{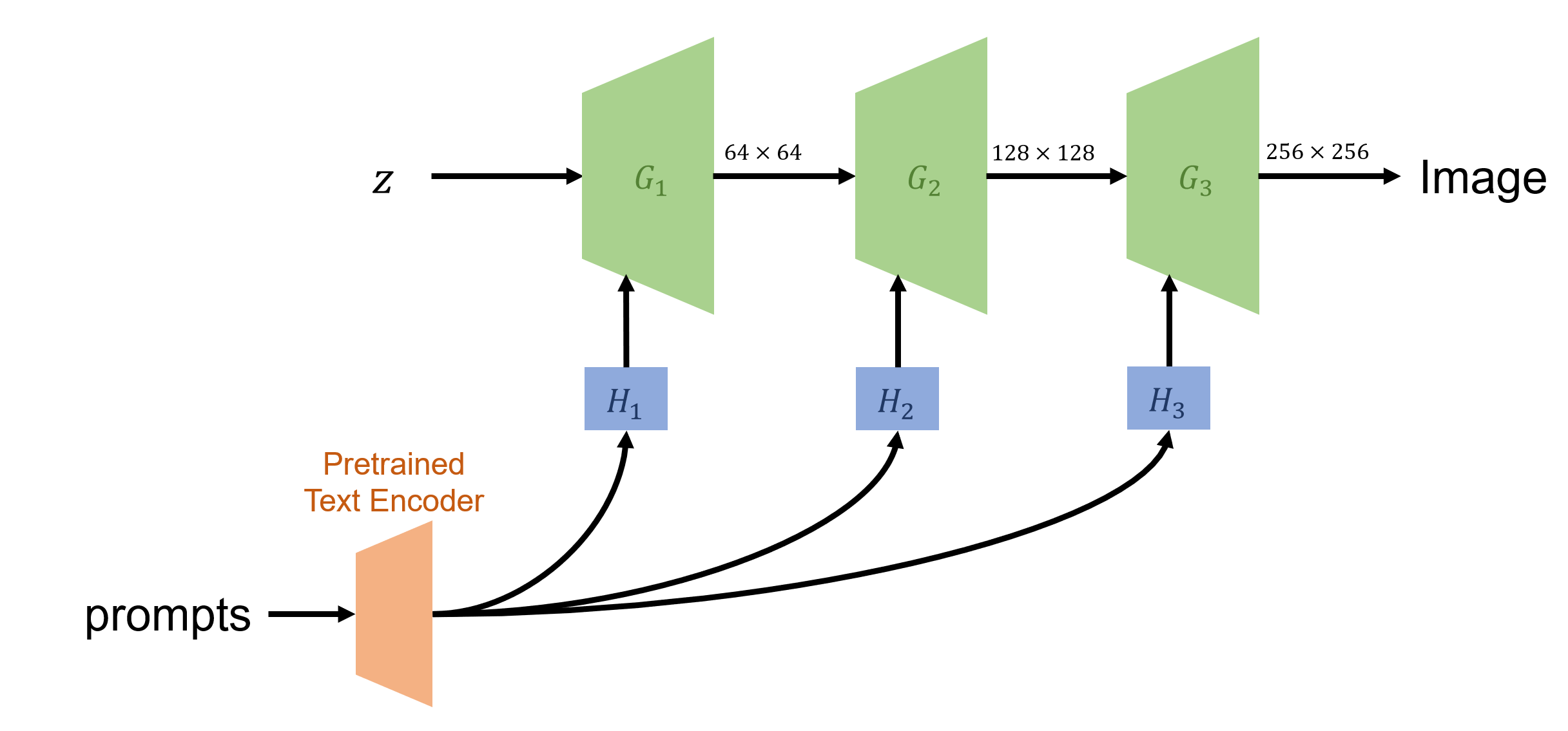}
    \caption{High-level architecture of DM-GAN.}
    \label{fig: DMGAN arch}
\end{figure}

\begin{figure}[!h]
    \centering
    \includegraphics[width=0.9\textwidth]{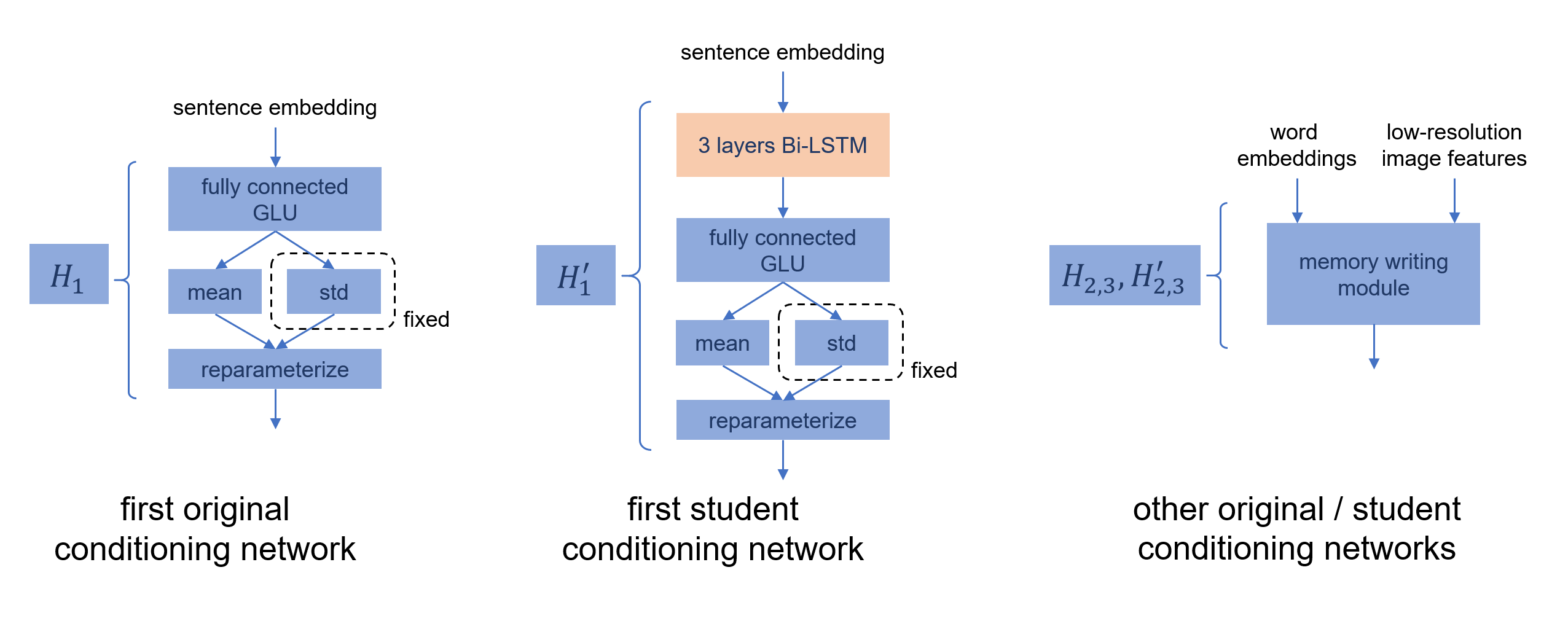}
    \caption{High-level architecture of original and higher-capacity conditioning networks of DM-GAN.}
    \label{fig: DMGAN H}
\end{figure}

\subsection{Visualization}\label{appendix: text2img exp: vis}

In Fig. \ref{fig: redact long beak white belly 1} - Fig. \ref{fig: redact long beak white belly 4} , we visualize examples where we redact prompts that contain \texttt{long beak} or \texttt{white belly}.
In Fig. \ref{fig: redact blue red wings 1} - Fig. \ref{fig: redact blue red wings 4} , we visualize examples where we redact prompts that contain \texttt{blue wings} or \texttt{red wings}.
In Fig. \ref{fig: redact blue 1} - Fig. \ref{fig: redact blue 4} , we visualize examples where we redact prompts that contain \texttt{blue}.
In Fig. \ref{fig: redact yellow red 1} - Fig. \ref{fig: redact yellow red 4} , we visualize examples where we redact prompts that contain \texttt{yellow} or \texttt{red}.

~~\newpage

\renewcommand{\promptsubfig}[4]{
	\begin{subfigure}[b]{0.228\textwidth}
		\centering 
		\includegraphics[width=0.99\textwidth]{figs-#1-#2-#3}
		\caption{#4}
	\end{subfigure}
}

\renewcommand{\promptsetting}{base-long_beak_white_belly_to_short_beak_black_belly-invalid}
\renewcommand{\promptbaseline}{Rewrite-upsample1-long_beak_white_belly_to_short_beak_black_belly-invalid}
\renewcommand{\promptid}{006.Least_Auklet_sentenceID34-9}

\begin{figure}[!h]
\centering
	\promptsubfig{\promptsetting}{\promptid}{pretrained.png}{Pre-trained $G(\cdot|c)$}
	\promptsubfig{\promptsetting}{\promptid}{target.png}{Reference $G(\cdot|\hat{c})$}
	\promptsubfig{\promptsetting}{\promptid}{distilled.png}{Our Redaction $G'(\cdot|c)$}
	\promptsubfig{\promptbaseline}{\promptid}{distilled.png}{Rewriting Baseline}
	\caption{\small Redacted prompt: \texttt{``this particular bird has a white belly and breasts and black head and back''}. Reference prompt: \texttt{``this particular bird has a black belly and breasts and black head and back''}.}
	\label{fig: redact long beak white belly 1}
\end{figure}

\renewcommand{\promptid}{031.Black_billed_Cuckoo_sentenceID227-0}
\begin{figure}[!h]
\centering
        \promptsubfig{\promptsetting}{\promptid}{pretrained.png}{Pre-trained $G(\cdot|c)$}
	\promptsubfig{\promptsetting}{\promptid}{target.png}{Reference $G(\cdot|\hat{c})$}
	\promptsubfig{\promptsetting}{\promptid}{distilled.png}{Our Redaction $G'(\cdot|c)$}
	\promptsubfig{\promptbaseline}{\promptid}{distilled.png}{Rewriting Baseline}
	\caption{\small Redacted prompt: \texttt{``this bird has feathers that are black and has a white belly''}. Reference prompt: \texttt{``this bird has feathers that are black and has a black belly''}.}
	\label{fig: redact long beak white belly 2}
\end{figure}

\renewcommand{\promptid}{036.Northern_Flicker_sentenceID539-3}
\begin{figure}[!h]
\centering
    	\promptsubfig{\promptsetting}{\promptid}{pretrained.png}{Pre-trained $G(\cdot|c)$}
	\promptsubfig{\promptsetting}{\promptid}{target.png}{Reference $G(\cdot|\hat{c})$}
	\promptsubfig{\promptsetting}{\promptid}{distilled.png}{Our Redaction $G'(\cdot|c)$}
	\promptsubfig{\promptbaseline}{\promptid}{distilled.png}{Rewriting Baseline}
	\caption{\small Redacted prompt: \texttt{``a small bird with an orange throat and long beak''}. Reference prompt: \texttt{``a small bird with an orange throat and short beak''}.}
	\label{fig: redact long beak white belly 3}
\end{figure}

\renewcommand{\promptid}{079.Belted_Kingfisher_sentenceID1229-4}
\begin{figure}[!h]
\centering
    	\promptsubfig{\promptsetting}{\promptid}{pretrained.png}{Pre-trained $G(\cdot|c)$}
	\promptsubfig{\promptsetting}{\promptid}{target.png}{Reference $G(\cdot|\hat{c})$}
	\promptsubfig{\promptsetting}{\promptid}{distilled.png}{Our Redaction $G'(\cdot|c)$}
	\promptsubfig{\promptbaseline}{\promptid}{distilled.png}{Rewriting Baseline}
	\caption{\small Redacted prompt: \texttt{``the black and white bird has a sharp long beak''}. Reference prompt: \texttt{``the black and white bird has a sharp short beak''}.}
	\label{fig: redact long beak white belly 4}
\end{figure}

~~\newpage
\renewcommand{\promptsetting}{base-blue_red_wings_to_white_wings-invalid}
\renewcommand{\promptbaseline}{Rewrite-upsample1-blue_red_wings_to_white_wings-invalid}
\renewcommand{\promptid}{001.Black_footed_Albatross_sentenceID8-0}

\begin{figure}[!h]
\centering
	\promptsubfig{\promptsetting}{\promptid}{pretrained.png}{Pre-trained $G(\cdot|c)$}
	\promptsubfig{\promptsetting}{\promptid}{target.png}{Reference $G(\cdot|\hat{c})$}
	\promptsubfig{\promptsetting}{\promptid}{distilled.png}{Our Redaction $G'(\cdot|c)$}
	\promptsubfig{\promptbaseline}{\promptid}{distilled.png}{Rewriting Baseline}
	\caption{\small Redacted prompt: \texttt{``this bird has wings that are blue and has black feet''}. Reference prompt: \texttt{``this bird has wings that are white and has black feet''}.}
	\label{fig: redact blue red wings 1}
\end{figure}

\renewcommand{\promptid}{009.Brewer_Blackbird_sentenceID4-2}
\begin{figure}[!h]
\centering
	\promptsubfig{\promptsetting}{\promptid}{pretrained.png}{Pre-trained $G(\cdot|c)$}
	\promptsubfig{\promptsetting}{\promptid}{target.png}{Reference $G(\cdot|\hat{c})$}
	\promptsubfig{\promptsetting}{\promptid}{distilled.png}{Our Redaction $G'(\cdot|c)$}
	\promptsubfig{\promptbaseline}{\promptid}{distilled.png}{Rewriting Baseline}
	\caption{\small Redacted prompt: \texttt{``this is a grey bird with blue wings and a pointy beak''}. Reference prompt: \texttt{``this is a grey bird with white wings and a pointy beak''}.}
	\label{fig: redact blue red wings 2}
\end{figure}

\renewcommand{\promptid}{035.Purple_Finch_sentenceID56-7}
\begin{figure}[!h]
\centering
	\promptsubfig{\promptsetting}{\promptid}{pretrained.png}{Pre-trained $G(\cdot|c)$}
	\promptsubfig{\promptsetting}{\promptid}{target.png}{Reference $G(\cdot|\hat{c})$}
	\promptsubfig{\promptsetting}{\promptid}{distilled.png}{Our Redaction $G'(\cdot|c)$}
	\promptsubfig{\promptbaseline}{\promptid}{distilled.png}{Rewriting Baseline}
	\caption{\small Redacted prompt: \texttt{``this bird has wings that are red and has a white belly''}. Reference prompt: \texttt{``this bird has wings that are white and has a white belly''}.}
	\label{fig: redact blue red wings 3}
\end{figure}

\renewcommand{\promptid}{186.Cedar_Waxwing_sentenceID299-4}
\begin{figure}[!h]
\centering
	\promptsubfig{\promptsetting}{\promptid}{pretrained.png}{Pre-trained $G(\cdot|c)$}
	\promptsubfig{\promptsetting}{\promptid}{target.png}{Reference $G(\cdot|\hat{c})$}
	\promptsubfig{\promptsetting}{\promptid}{distilled.png}{Our Redaction $G'(\cdot|c)$}
	\promptsubfig{\promptbaseline}{\promptid}{distilled.png}{Rewriting Baseline}
	\caption{\small Redacted prompt: \texttt{``this bird has wings that are red and has a yellow belly''}. Reference prompt: \texttt{``this bird has wings that are white and has a yellow belly''}.}
	\label{fig: redact blue red wings 4}
\end{figure}

~~\newpage
\renewcommand{\promptsetting}{base-blue_to_red-invalid}
\renewcommand{\promptbaseline}{Rewrite-upsample1-blue_to_red-invalid}
\renewcommand{\promptid}{001.Black_footed_Albatross_sentenceID9-1}

\begin{figure}[!h]
\centering
	\promptsubfig{\promptsetting}{\promptid}{pretrained.png}{Pre-trained $G(\cdot|c)$}
	\promptsubfig{\promptsetting}{\promptid}{target.png}{Reference $G(\cdot|\hat{c})$}
	\promptsubfig{\promptsetting}{\promptid}{distilled.png}{Our Redaction $G'(\cdot|c)$}
	\promptsubfig{\promptbaseline}{\promptid}{distilled.png}{Rewriting Baseline}
	\caption{\small Redacted prompt: \texttt{``this bird has wings that are blue and has black feet''}. Reference prompt: \texttt{``this bird has wings that are red and has black feet''}.}
	\label{fig: redact blue 1}
\end{figure}

\renewcommand{\promptid}{008.Rhinoceros_Auklet_sentenceID29-4}
\begin{figure}[!h]
\centering
	\promptsubfig{\promptsetting}{\promptid}{pretrained.png}{Pre-trained $G(\cdot|c)$}
	\promptsubfig{\promptsetting}{\promptid}{target.png}{Reference $G(\cdot|\hat{c})$}
	\promptsubfig{\promptsetting}{\promptid}{distilled.png}{Our Redaction $G'(\cdot|c)$}
	\promptsubfig{\promptbaseline}{\promptid}{distilled.png}{Rewriting Baseline}
	\caption{\small Redacted prompt: \texttt{``this bird has small wings and blue grey nape''}. Reference prompt: \texttt{``this bird has small wings and red grey nape''}.}
	\label{fig: redact blue 2}
\end{figure}

\renewcommand{\promptid}{014.Indigo_Bunting_sentenceID129-4}
\begin{figure}[!h]
\centering
	\promptsubfig{\promptsetting}{\promptid}{pretrained.png}{Pre-trained $G(\cdot|c)$}
	\promptsubfig{\promptsetting}{\promptid}{target.png}{Reference $G(\cdot|\hat{c})$}
	\promptsubfig{\promptsetting}{\promptid}{distilled.png}{Our Redaction $G'(\cdot|c)$}
	\promptsubfig{\promptbaseline}{\promptid}{distilled.png}{Rewriting Baseline}
	\caption{\small Redacted prompt: \texttt{``the bird is blue with gray wins and tail''}. Reference prompt: \texttt{``the bird is red with gray wins and tail''}.}
	\label{fig: redact blue 3}
\end{figure}

\renewcommand{\promptid}{079.Belted_Kingfisher_sentenceID1095-3}
\begin{figure}[!h]
\centering
	\promptsubfig{\promptsetting}{\promptid}{pretrained.png}{Pre-trained $G(\cdot|c)$}
	\promptsubfig{\promptsetting}{\promptid}{target.png}{Reference $G(\cdot|\hat{c})$}
	\promptsubfig{\promptsetting}{\promptid}{distilled.png}{Our Redaction $G'(\cdot|c)$}
	\promptsubfig{\promptbaseline}{\promptid}{distilled.png}{Rewriting Baseline}
	\caption{\small Redacted prompt: \texttt{``this bird has wings that are blue and has a white belly''}. Reference prompt: \texttt{``this bird has wings that are red and has a white belly''}.}
	\label{fig: redact blue 4}
\end{figure}

~~\newpage
\renewcommand{\promptsetting}{LSTM_fixvar_anneal_1_3-yellow_red_to_black_black-invalid}
\renewcommand{\promptbaseline}{Rewrite-upsample1-yellow_red_to_black_black-invalid}
\renewcommand{\promptid}{035.Purple_Finch_sentenceID986-0}

\begin{figure}[!h]
\centering
	\promptsubfig{\promptsetting}{\promptid}{pretrained.png}{Pre-trained $G(\cdot|c)$}
	\promptsubfig{\promptsetting}{\promptid}{target.png}{Reference $G(\cdot|\hat{c})$}
	\promptsubfig{\promptsetting}{\promptid}{distilled.png}{Our Redaction $G'(\cdot|c)$}
	\promptsubfig{\promptbaseline}{\promptid}{distilled.png}{Rewriting Baseline}
	\caption{\small Redacted prompt: \texttt{``this is a red bird with a white belly and a large beak''}. Reference prompt: \texttt{``this is a black bird with a white belly and a large beak''}.}
	\label{fig: redact yellow red 1}
\end{figure}

\renewcommand{\promptid}{035.Purple_Finch_sentenceID991-3}
\begin{figure}[!h]
\centering
	\promptsubfig{\promptsetting}{\promptid}{pretrained.png}{Pre-trained $G(\cdot|c)$}
	\promptsubfig{\promptsetting}{\promptid}{target.png}{Reference $G(\cdot|\hat{c})$}
	\promptsubfig{\promptsetting}{\promptid}{distilled.png}{Our Redaction $G'(\cdot|c)$}
	\promptsubfig{\promptbaseline}{\promptid}{distilled.png}{Rewriting Baseline}
	\caption{\small Redacted prompt: \texttt{``a bird with thick short beak red crown red breast that fades into a pink and white belly and red coverts''}. Reference prompt: \texttt{``a bird with thick short beak black crown black breast that fades into a pink and white belly and black coverts''}.}
	\label{fig: redact yellow red 2}
\end{figure}

\renewcommand{\promptid}{038.Great_Crested_Flycatcher_sentenceID1910-2}
\begin{figure}[!h]
\centering
	\promptsubfig{\promptsetting}{\promptid}{pretrained.png}{Pre-trained $G(\cdot|c)$}
	\promptsubfig{\promptsetting}{\promptid}{target.png}{Reference $G(\cdot|\hat{c})$}
	\promptsubfig{\promptsetting}{\promptid}{distilled.png}{Our Redaction $G'(\cdot|c)$}
	\promptsubfig{\promptbaseline}{\promptid}{distilled.png}{Rewriting Baseline}
	\caption{\small Redacted prompt: \texttt{``this yellow breasted bird has a dark gray head and chest a thin beak and a long tail''}. Reference prompt: \texttt{``this black breasted bird has a dark gray head and chest a thin beak and a long tail''}.}
	\label{fig: redact yellow red 3}
\end{figure}

\renewcommand{\promptid}{043.Yellow_bellied_Flycatcher_sentenceID2189-1}
\begin{figure}[!h]
\centering
	\promptsubfig{\promptsetting}{\promptid}{pretrained.png}{Pre-trained $G(\cdot|c)$}
	\promptsubfig{\promptsetting}{\promptid}{target.png}{Reference $G(\cdot|\hat{c})$}
	\promptsubfig{\promptsetting}{\promptid}{distilled.png}{Our Redaction $G'(\cdot|c)$}
	\promptsubfig{\promptbaseline}{\promptid}{distilled.png}{Rewriting Baseline}
	\caption{\small Redacted prompt: \texttt{``the yellow and black bird has a white belly''}. Reference prompt: \texttt{``the black and black bird has a white belly''}.}
	\label{fig: redact yellow red 4}
\end{figure}

~~\newpage
~~\newpage
\subsection{Adversarial Prompting Attack} \label{appendix: text2img exp: adv}

\renewcommand{\promptsubfig}[4]{
	\begin{subfigure}[b]{0.235\textwidth}
		\centering 
		\includegraphics[width=0.99\textwidth]{figs-#1-#2-#3}
		\caption{#4}
	\end{subfigure}
}
\renewcommand{\promptsetting}{adv-LSTM_fixvar_anneal_1_3-yellow_red_to_black_black-invalid}
\renewcommand{\promptid}{008.Rhinoceros_Auklet_sentenceID85}
\begin{figure}[!h]
\centering
	\promptsubfig{\promptsetting}{\promptid}{pretrained.png}{Pre-trained $G(\cdot|c)$}
	\promptsubfig{\promptsetting}{\promptid}{distilled.png}{Our Redaction $G'(\cdot|c)$}
	\promptsubfig{\promptsetting}{\promptid}{adversarial.png}{Adversarial $G'(\cdot|c_{\mathrm{adv}})$}
	\caption{\small Redacted prompt: \texttt{``a short yellow beak on an otherwise black bird bright irises around black pupils''}. Adversarial prompt: \texttt{``a short yellow wingtip on yhellow otherwise wa bird bright irises around dramatic pupils''}.}
	\label{fig: adv 1}
\end{figure}

\renewcommand{\promptid}{035.Purple_Finch_sentenceID957}
\begin{figure}[!h]
\centering
	\promptsubfig{\promptsetting}{\promptid}{pretrained.png}{Pre-trained $G(\cdot|c)$}
	\promptsubfig{\promptsetting}{\promptid}{distilled.png}{Our Redaction $G'(\cdot|c)$}
	\promptsubfig{\promptsetting}{\promptid}{adversarial.png}{Adversarial $G'(\cdot|c_{\mathrm{adv}})$}
	\caption{\small Redacted prompt: \texttt{``this is a colorful bird boasting a coral red color strongest at head and fading towards tail''}. Adversarial prompt: \texttt{``wingtip is a colorful bird boasting the coral red color strongest at primary and fading towards tail''}.}
	\label{fig: adv 2}
\end{figure}

\renewcommand{\promptid}{043.Yellow_bellied_Flycatcher_sentenceID2304}
\begin{figure}[!h]
\centering
	\promptsubfig{\promptsetting}{\promptid}{pretrained.png}{Pre-trained $G(\cdot|c)$}
	\promptsubfig{\promptsetting}{\promptid}{distilled.png}{Our Redaction $G'(\cdot|c)$}
	\promptsubfig{\promptsetting}{\promptid}{adversarial.png}{Adversarial $G'(\cdot|c_{\mathrm{adv}})$}
	\caption{\small Redacted prompt: \texttt{``this small bird has a light yellow belly and a dark yellow head with black wings and tail''}. Adversarial prompt: \texttt{``this small bird the the light yellow belly and a dark yellow head with black remaining and mahogany''}.}
	\label{fig: adv 3}
\end{figure}

\renewcommand{\promptid}{095.Baltimore_Oriole_sentenceID4106}
\begin{figure}[!h]
\centering
	\promptsubfig{\promptsetting}{\promptid}{pretrained.png}{Pre-trained $G(\cdot|c)$}
	\promptsubfig{\promptsetting}{\promptid}{distilled.png}{Our Redaction $G'(\cdot|c)$}
	\promptsubfig{\promptsetting}{\promptid}{adversarial.png}{Adversarial $G'(\cdot|c_{\mathrm{adv}})$}
	\caption{\small Redacted prompt: \texttt{``this bird is golden yellow with a black head''}. Adversarial prompt: \texttt{``through mahogany wingtip golden yellow bed a black head''}.}
	\label{fig: adv 4}
\end{figure}

\renewcommand{\promptid}{096.Hooded_Oriole_sentenceID4283}
\begin{figure}[!h]
\centering
	\promptsubfig{\promptsetting}{\promptid}{pretrained.png}{Pre-trained $G(\cdot|c)$}
	\promptsubfig{\promptsetting}{\promptid}{distilled.png}{Our Redaction $G'(\cdot|c)$}
	\promptsubfig{\promptsetting}{\promptid}{adversarial.png}{Adversarial $G'(\cdot|c_{\mathrm{adv}})$}
	\caption{\small Redacted prompt: \texttt{``this bird has a golden yellow body a black head a pink bill that curves downward''}. Adversarial prompt: \texttt{``mahogany the has blacker golden yellow basically a black head a primary bill that through hughed''}.}
	\label{fig: adv 5}
\end{figure}

\renewcommand{\promptid}{098.Scott_Oriole_sentenceID4774}
\begin{figure}[!h]
\centering
	\promptsubfig{\promptsetting}{\promptid}{pretrained.png}{Pre-trained $G(\cdot|c)$}
	\promptsubfig{\promptsetting}{\promptid}{distilled.png}{Our Redaction $G'(\cdot|c)$}
	\promptsubfig{\promptsetting}{\promptid}{adversarial.png}{Adversarial $G'(\cdot|c_{\mathrm{adv}})$}
	\caption{\small Redacted prompt: \texttt{``this bird has a black beak a black crown and a belly that is golden yellow''}. Adversarial prompt: \texttt{``mahogany love has a riangular love bed black crown and a dramatic hjas breadt golden yellow''}.}
	\label{fig: adv 6}
\end{figure}

~~\newpage
~~\newpage
\section{Additional Details and Experiments for Redaction from DiffWave} \label{appendix: tts exp}

\subsection{Details of the Pre-trained Model and the Proposed Student Networks} \label{appendix: tts exp: detail}

The high-level architecture of DiffWave is shown in Fig. \ref{fig: DiffWave arch}. We select the base (64 channels) version of the model. The model is conditioned on 80-band Mel-spectrogram with FFT size$=1024$, hop size$=256$, and window size$=1024$. Each conditioning network has two up-sampling layers that up-sample the spectrogram, and a one-dimensional convolution layer that maps the number of channels to 128. We use the pre-trained model and code from \url{https://github.com/philsyn/DiffWave-Vocoder} under MIT license, which is trained on all LJSpeech samples except for LJ001 and LJ002, which is used as the test set. The pre-trained model takes days to train on 8 GPUs.

\begin{figure}[!h]
    \centering
    \includegraphics[width=0.8\textwidth]{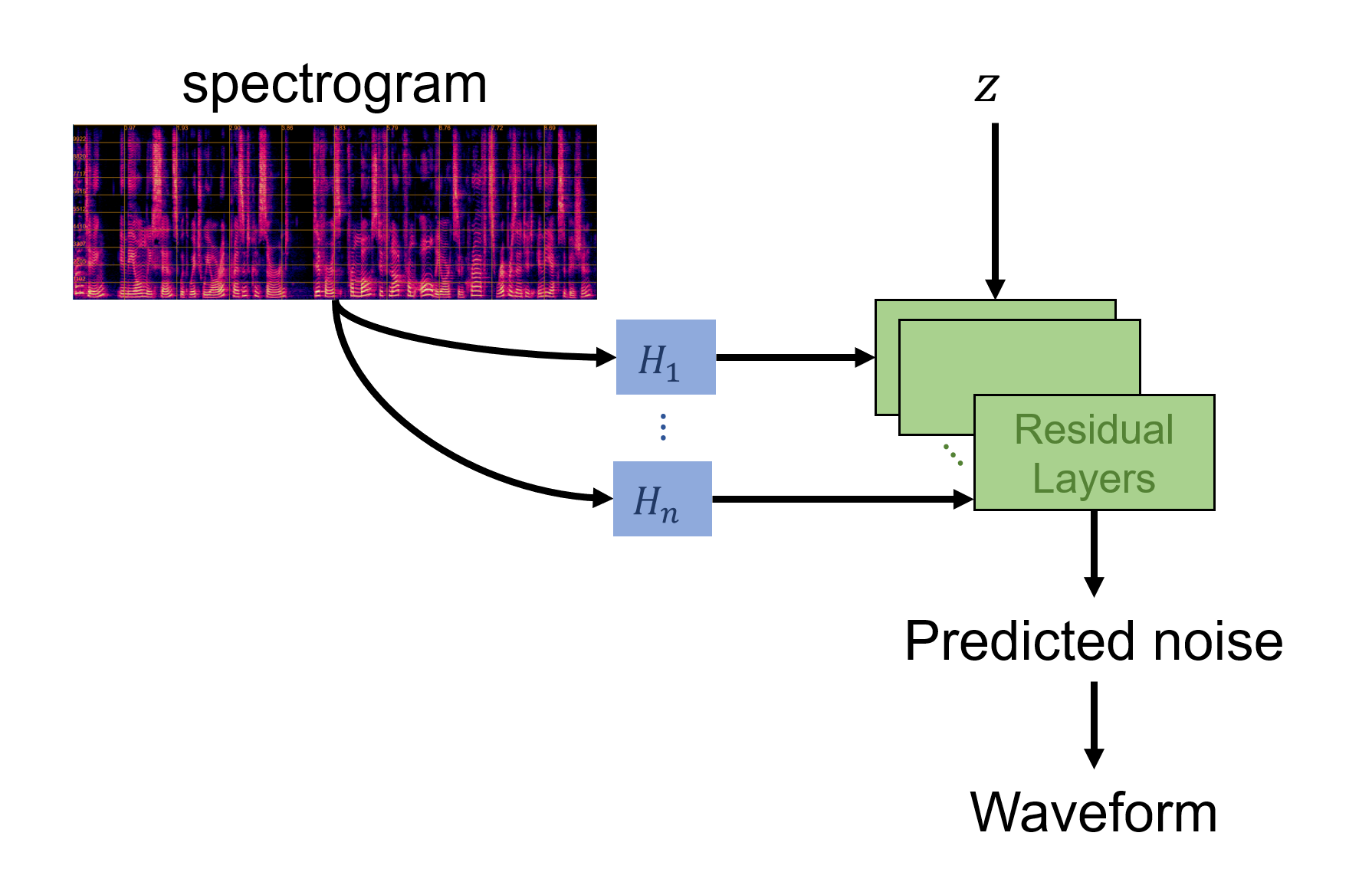}
    \caption{High-level architecture of DiffWave.}
    \label{fig: DiffWave arch}
\end{figure}

For the additional layers in the improved capacity configuration, all convolutions are one-dimensional with kernel size $=1$. $h_{\mathrm{trans}}$ includes two convolutions that keep the channels ($=80$) and a leaky ReLU activation with negative slope $=0.4$ between. $h_{\mathrm{gate}}$ includes one zero-initialized convolution that changes channels from 80 to 128 followed by a sigmoid activation. The architecture of student conditioning networks with improved capacity is shown in Fig. \ref{fig: DiffWave H}.

\begin{figure}[!h]
    \centering
    \includegraphics[width=0.9\textwidth]{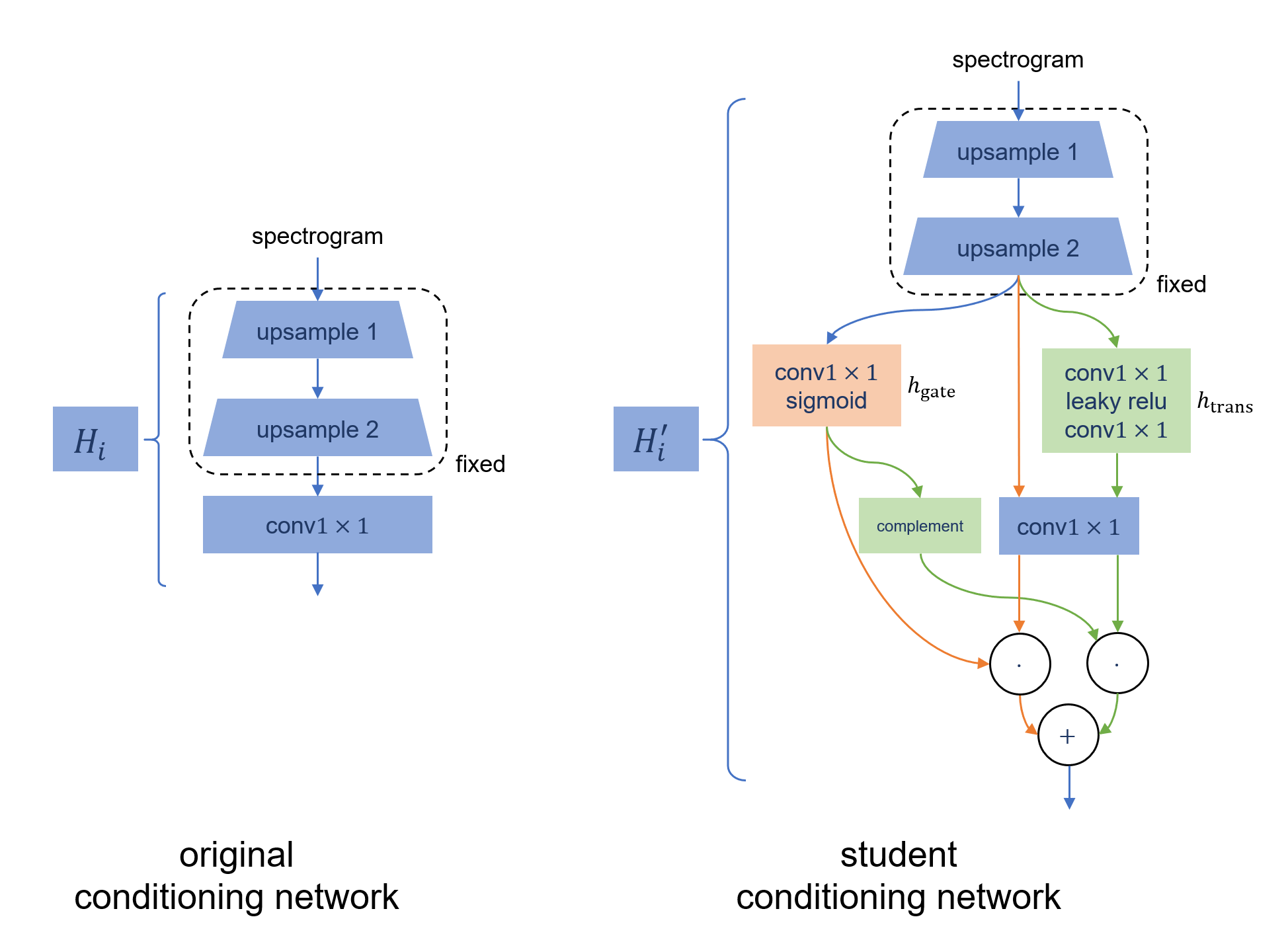}
    \caption{High-level architecture of original and higher-capacity conditioning networks of DiffWave.}
    \label{fig: DiffWave H}
\end{figure}

\subsection{Evaluation Metrics} \label{appendix: tts exp: eval}

The metrics for speech quality are as follows.
\begin{enumerate}
    \item Perceptual Evaluation of Speech Quality \citep{PESQ2001}, or PESQ, measures the quality of generated speech. It ranges between -0.5 and 4.5 and is higher for better quality.
    \item Short-Time Objective Intelligibility \citep{taal2011algorithm}, or STOI, measures the intelligibility of generated speech. It ranges between 0\% and 100\% and is higher for better intelligibility.
\end{enumerate}

The voice classifier is trained and tested on audio clips with 0.7256 second. For each audio clip, we extract 20-dimensional Mel-frequency cepstral
coefficients \citep{xu2005hmm}, 7-dimensional spectral contrast \citep{jiang2002music}, and 12-dimensional chroma features \citep{ellis2007chroma}. The classifier is a support vector classifier with the radial basis function kernel with regularization coefficient $=1$.

\end{document}